\theoremstyle{plain}
\crefname{assumption}{Assumption}{Assumptions}
\crefname{lemma}{Lemma}{Lemmas}
\crefname{theorem}{Theorem}{Theorems}
\crefname{ALC@unique}{Line}{Lines}
\crefname{ALC@line}{line}{lines}
\crefname{ALC@unique}{line}{lines}
\Crefname{ALC@unique}{Line}{Lines}
\theoremstyle{plain}
\theoremstyle{definition}
\newtheorem{assumption}{Assumption}
\theoremstyle{remark}
\newif\ifcompactparagraph
\newcommand{\parahead}[1]{%
  \ifcompactparagraph\textbf{#1}\else\paragraph{#1}\fi%
}
\newcommand{\ourmethod}{\texttt{SMOG}\xspace}
\newcommand{\priorart}{\texttt{ScaML-GP}\xspace}
\newcommand{\test}{t}
\newcommand{\nummeta}{M}
\newcommand{\metaset}{\mathcal{M}}
\newcommand{\metadata}{\mathcal{D}_{1:\nummeta}}
\newcommand{\testdata}{\mathcal{D}_\test}
\DeclareMathOperator*{\argmax}{arg\,max}
\newcommand{\zeromatrix}[1]{\bm{0}_{{#1}\times{#1}}}
\newcommand{\zerovector}[1]{\bm{0}_{1\times{#1}}}
\newcommand{\onevector}[1]{\bm{1}_{1\times{#1}}}
\newcommand{\wmatrix}[1]{\begin{pmatrix}w_{{#1}1}&w_{{#1}2}\\w_{{#1}1}&w_{{#1}2}\end{pmatrix}}
\newcommand{\wmatrixSQ}[1]{\begin{pmatrix}w_{{#1}1}^2&w_{{#1}1}w_{{#1}2}\\w_{{#1}1}w_{{#1}2}&w_{{#1}2}^2\end{pmatrix}}
\renewcommand{\epsilon}{\varepsilon}
\newcommand{\x}{\ensuremath{\mathbf{x}}}
\newcommand{\X}{\ensuremath{\mathbf{X}}}
\newcommand{\meta}{m}
\newcommand{\cov}[1]{\mathrm{Cov}\left[#1\right]}
\newcommand{\kk}[2][]{k_{#1}\!\left[#2\right]}
\newcommand{\corr}[1]{\mathrm{Corr}\left[#1\right]}
\newcommand{\diag}{\mathrm{diag}}
\newcommand{\IndGP}{\texttt{Ind.\,GP}\xspace}
\newcommand{\MOGP}{\texttt{MO-GP}\xspace}
\newcommand{\IndRGPE}{\texttt{Ind.\,RGPE}\xspace}
\newcommand{\IndSGPT}{\texttt{Ind.\,SGPT}\xspace}
\newcommand{\IndABLR}{\texttt{Ind.\,ABLR}\xspace}
\newcommand{\IndSCAML}{\texttt{Ind.\,ScaML-GP}\xspace}
\newcommand{\IndGCThreeP}{\texttt{Ind.\,GC3P}\xspace}
\newcommand{\IndSMOG}{\texttt{Ind.\,SMOG}\xspace}
\newcommand{\MOTPE}{\texttt{MO-TPE}\xspace}
\newcommand{\hartmann}{\texttt{Hartmann6}\xspace}
\newcommand{\branincurrin}{\texttt{Branin-Currin}\xspace}
\acrodef{GP}[GP]{Gaussian process}
\acrodef{BO}[BO]{Bayesian optimization}
\acrodef{MMOBO}[MMOBO]{meta-learning multi-task Bayesian optimization}
\acrodef{UAV}[UAV]{unmanned aerial vehicle}
\acrodef{HP}[HP]{hyperparameter}
\acrodef{HPO}[HPO]{hyperparameter optimization}
\acrodef{PSD}[PSD]{positive semi-definite}
\acrodef{MOBO}[MOBO]{multi-objective Bayesian optimization}
\acrodef{TPE}[TPE]{tree-structured Parzen estimator}
\acrodef{AF}[AF]{acquisition function}
\acrodef{HV}[HV]{hypervolume}
\acrodef{ARD}[ARD]{automated relevance determination}
\acrodef{MTGP}[MTGP]{multi-task Gaussian process}
\acrodef{RSS}[RSS]{resident set size}
\acrodef{RMSE}[RMSE]{root mean squared error}
\acrodef{NLPD}[NLPD]{negative log predictive density}
\acrodef{MLL}[MLL]{marginal log-likelihood}
\acrodef{KL}[KL]{Kullback-Leibler}
\acrodef{SEM}[SEM]{standard error of the mean}
\title{\ourmethod: Scalable Meta-Learning for Multi-Objective Bayesian Optimization}
\author{%
  Leonard Papenmeier \\
  Department of Information Systems \\
  University of Münster, Germany \\
  \texttt{leonard.papenmeier@uni-muenster.de} \\
  \And
  Petru Tighineanu \\
  Robert Bosch GmbH \\
  Renningen, Germany \\
}
\begin{document}

\maketitle

\begin{abstract}
Multi-objective optimization aims to solve problems with competing objectives.
Evaluating such problems is often slow or expensive, limiting the budget of evaluations. 
In many applications, historical data from related optimization tasks is available and can be leveraged via \emph{meta-learning} to accelerate optimization.
\Acl{BO}, as a promising technique for expensive black-box problems, has been extended independently to meta-learning and multi-objective optimization, but methods that simultaneously address both settings remain largely unexplored.
We propose \ourmethod---a scalable and modular meta-learning model based on a multi-output \acl{GP}---that explicitly learns correlations between objectives. 
\ourmethod builds a structured joint \acl{GP} prior across meta- and target tasks and, after conditioning on metadata, yields a closed-form prior for the target task. 
This construction propagates metadata uncertainty into the target surrogate in a principled way. 
\ourmethod supports hierarchical, parallel training, achieving linear scaling with the number of meta-tasks. 
The resulting surrogate integrates seamlessly with standard \acl{MOBO} acquisition functions.
We demonstrate that our method is consistently competitive, delivering strong data efficiency across representative benchmarks and applications. 
\end{abstract}

\section{Introduction}
\label{sec:introduction}
Many high-impact optimization problems are intrinsically \emph{multi-objective}: engineers and machine-learning practitioners rarely optimize a single scalar, but rather trade off competing objectives such as performance, cost, latency, or energy. 
At the same time, these objectives are often \emph{expensive} to evaluate, naturally putting one in the low-data regime where \ac{BO}/\ac{MOBO} is particularly effective~\citep{snoek2012practical,zhang2020bayesian,BO_Shahriari,daulton2020differentiable,daulton2021parallel}. 
Organizations rarely run non-recurring optimizations; instead, they accumulate logs from past optimization runs for related products, machines, datasets, workloads, or environments. 
This makes \emph{meta-learning for expensive multi-objective optimization} an important scenario: rather than starting each new optimization from scratch, practitioners aim to save resources by leveraging prior tasks to achieve good solutions with only a few evaluations. 
Examples span industrial process tuning and calibration (where competing quality metrics must be balanced), scientific design (e.g., materials discovery and advanced manufacturing with multiple competing properties), and machine-learning system design (e.g., multi-objective hyperparameter and architecture tuning trading off accuracy, latency, and resource use)~\citep{gopakumar2018multi, myung2025multi, pfisterer2022yahpo, eggensperger2021hpobench, Marco2017virtual, herbol2018efficient}.

Despite its promise, meta-learning for multi-objective optimization is technically subtle in the low-data regime. 
First, in multi-objective settings, “what to transfer” is not a single optimum but information about a \emph{Pareto set/front}, and decision-making typically depends on uncertainty-aware criteria (e.g., hypervolume-based utilities)~\citep{Knowles2006parego, daulton2020differentiable, daulton2021parallel}. 
Second, historical data are often scarce and heterogeneous across tasks; transfer must therefore account for \emph{meta-task uncertainty} to avoid overconfident bias from weak or mismatched prior tasks \citep{Dai2022provably, Volpp2020Meta-Learning, feurer2022practical, tighineanu2024scalable}. 
Third, multi-objective problems add another layer: objectives can be correlated, so efficiently using evidence often requires non-trivial \emph{probabilistic multi-output surrogates}. 
Treating objectives independently can waste information precisely when evaluations are precious. 
These challenges leave a key gap: we need meta-learning methods that are (i) uncertainty-aware, (ii) scalable across many meta-tasks, and (iii) able to exploit cross-objective correlations. 
Fully joint multi-task multi-output \ac{GP} models are principled but quickly become infeasible when information from many related tasks is available~\citep{rasmussengp,alvarez2012kernels}, while most scalable alternatives are developed for single-objective transfer and treat objectives independently when adapted to \ac{MOBO}. 

\parahead{Contribution.} 
We introduce \ourmethod, a multi-output \ac{GP} surrogate that meta-learns a target-task prior with learned cross-objective covariance. 
\ourmethod is the first surrogate to simultaneously deliver three properties for meta-\ac{MOBO} in the low-data regime: \emph{fully Bayesian} uncertainty propagation over all meta-task data, \emph{linear scaling} in the number of meta-tasks, and \emph{learned correlations} between objectives. 
The single-objective \priorart~\citep{tighineanu2024scalable} is recovered as a specialization.
\ourmethod's posterior plugs directly into standard \ac{MOBO} pipelines (e.g., hypervolume-based acquisition optimization) \citep{daulton2020differentiable, daulton2021parallel, Knowles2006parego}. 
\ourmethod outperforms, on average, all other baselines in our experiments, demonstrating its practical utility in the regime of scarce and noisy data.

\section{Related work}\label{sec:related}

Most work on meta-learning for \ac{BO} focuses on learning a better surrogate for a single-objective target task. 
A principled approach is to build a \emph{joint} Bayesian model across tasks (e.g., a multi-task \ac{GP}), which yields coherent uncertainty estimates but is computationally prohibitive---scaling cubically in the total number of observations and, at best, quadratically in the task-correlation hyperparameters~\citep{bonilla2007multi, cao2010adaptive, alvarez2012kernels, NIPS2013_swersky, pmlr-v33-yogatama14, EnvGP, NIPS2017_MISO, pmlr-v54-shilton17a, tighineanu2022transfer}. 
To improve scalability, a number of methods rely either on heuristic combinations of per-task surrogates (e.g., \ac{GP} ensembles)~\citep{feurer2022practical, wistuba2018scalable, Dai2022provably} or on building a parametric \ac{GP} prior on the metadata~\citep{poloczek2016warm,ABLR,salinas2020quantile,Wistuba2021few,wang2023hyperbo}. 
These approaches scale better but sacrifice a joint Bayesian treatment and thus principled uncertainty propagation across tasks. 
A recent work, \priorart~\cite{tighineanu2024scalable}, addresses this tension by introducing assumptions that lead to a modular \ac{GP} model: conditioning on meta-data exposes a modular decomposition into $M$ independent meta-task \ac{GP} posteriors and a target-task \ac{GP} prior, enabling scalable and fully Bayesian transfer.

In contrast to the rich single-objective literature, meta-learning methods that directly target multi-objective optimization remain scarce. 
A notable exception is the task-similarity extension of \texttt{MO-TPE} by \citet{watanabe2022speeding}, which transfers knowledge by reweighting the acquisition based on task similarity. 
While scalable, it is based on density ratios and lacks a unified probabilistic multi-output surrogate. 
Recent few-shot surrogate-assisted evolutionary methods for expensive multi-objective optimization meta-learn surrogates~\citep{yufseo}.
However, neither approach learns a correlated multi-output posterior and thus cannot exploit cross-objective dependencies or propagate meta-task uncertainty to the target surrogate in a principled way.
Our work targets this underexplored regime by combining scalable \ac{GP} meta-learning with a multi-output surrogate that models cross-objective dependencies.

\parahead{Desiderata.}
Each of the methods above fails at least one of the requirements (i)--(iii) in \cref{sec:introduction}: per-objective application of \priorart~\citep{tighineanu2024scalable} forgoes (iii); fully joint multi-task \acp{GP} over $(\text{task},\text{objective})$ pairs~\citep{bonilla2007multi} are cubic in the meta-observation count~\citep{alvarez2012kernels}, violating (ii); ensemble combinations of per-task posteriors~\citep{feurer2018scalable, wistuba2018scalable} replace principled conditioning with heuristic weighting, sacrificing (i). \ourmethod, introduced next, is the specific design that satisfies all three.

\section{Method}

\begin{figure}[t]
    \centering
    \includegraphics[width=\linewidth]{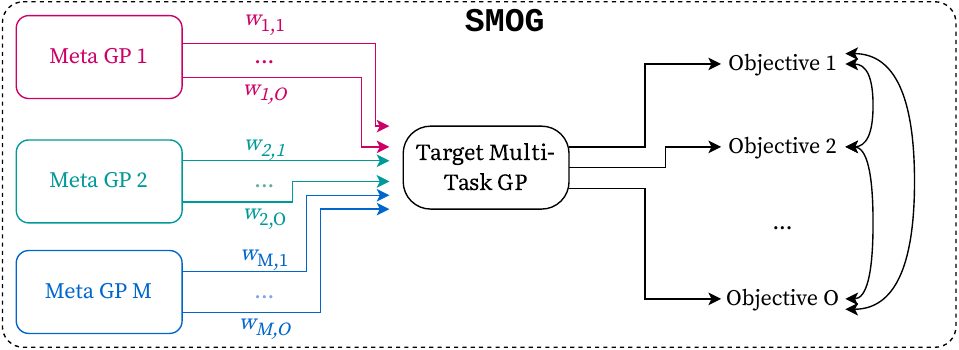}
    \caption{High-level view of \ourmethod: Meta tasks are modeled independently across tasks, with each source task represented by its own multi-output Kronecker-structured \acs{GP}. The target \acs{GP} combines the weighted means and covariance blocks of these source \acsp{GP} with a residual multi-output kernel to form an informative target-task prior.}
    \label{fig:smog-diagram}
\end{figure}

We aim to find a set of Pareto-optimal solutions of a \emph{target} black-box function
$$
\bm{f}_t:\mathcal{X}\to\mathbb{R}^O,
$$
where $\mathcal{X}\subset \mathbb{R}^D$ is the search space of
dimensionality $D$, and $O$ is the number of objectives.
Observations $\bm{y}_n=(y_{n,o})_{o\in\mathcal{O}}$ of $\bm{f}_t$ may be corrupted by independent
zero-mean Gaussian noise,
$\bm{y}_n=\bm{f}_t(\bm{x}_n)+\bm{\epsilon}_n$ with
$\bm{\epsilon}_n\sim\mathcal{N}\!\big(\bm{0},\mathrm{diag}(\sigma_1^2,\ldots,\sigma_O^2)\big)$.
We use the target data $\testdata=\{(\bm{x}_n,\bm{y}_n)\}_{n=1}^{N_t}$ to build a probabilistic model.
We model $\bm{f}_t$ with a multi-output \ac{GP} prior with mean $m_o(\cdot)$ and kernel
$k_{oo'}(\cdot,\cdot)$ for $o,o'\in\mathcal{O}=\{1,\ldots,O\}$.
Conditioned on $\testdata$, the posterior is a \ac{GP} with mean and covariance~\citep{rasmussengp}
\begin{equation}
\begin{split}
\hat{m}_{t,o}(\bm{x})
&= m_o(\bm{x}) +
\bm{k}\!\left((\bm{x},o), \bm{X}_t\right)\left[\bm{K}(\bm{X}_t,\bm{X}_t)+\bm{\Sigma}_\epsilon\right]^{-1}
\left(\bm{Y}_t-\bm{m}(\bm{X}_t)\right), \\
\hat{k}_{t,oo'}(\bm{x},\bm{x}')
&= k\!\left((\bm{x},o),(\bm{x}',o')\right) -
\bm{k}\!\left((\bm{x},o), \bm{X}_t\right) \left[\bm{K}(\bm{X}_t,\bm{X}_t)+\bm{\Sigma}_\epsilon\right]^{-1}
\bm{k}\!\left(\bm{X}_t,(\bm{x}',o')\right),
\end{split}
\label{eq:gp_posterior}
\end{equation}
where $\bm{X}_t=\big((\bm{x}_n,o)\big)_{n=1,\ldots,N_t,\;o\in\mathcal{O}}$ stores points in the search space with their objective index and $\bm{Y}_t=\big(y_{n,o}\big)_{n,o}$ the corresponding observations.
We assume per-objective Gaussian noise
$\bm{\Sigma}_\epsilon=\mathrm{diag}(\sigma_1^2\bm{I}_{N_t},\ldots,\sigma_{O}^2\bm{I}_{N_t})$. To leverage related tasks, we assume access to $M\ge 1$ meta-tasks with datasets
$\mathcal{D}_{1:M}=\bigcup_{m\in\mathcal{M}}\mathcal{D}_m$, where $\mathcal{M}=[M]$.
Each meta-task $m$ provides a dataset $\mathcal{D}_m=\{(\bm{x}_{m,n},\bm{y}_{m,n})\}_{n=1}^{N_m}$ with
$\bm{y}_{m,n}\in\mathbb{R}^O$ and per-objective Gaussian noise
$\bm{y}_{m,n}=\bm{f}_m(\bm{x}_{m,n})+\bm{\epsilon}_{m,n}$,
$\bm{\epsilon}_{m,n}\sim\mathcal{N}\!\big(\bm{0},\mathrm{diag}(\sigma_{m,1}^2,\ldots,\sigma_{m,O}^2)\big)$.
For notational convenience, we collect all observations of meta-task $m$ by concatenating points with their objective index and corresponding outputs as
$\bm{X}_m=\big((\bm{x}_{m,n},o)\big)_{n=1,\ldots,N_m,\;o\in\mathcal{O}}$ and
$\bm{Y}_m=\big(y_{m,n,o}\big)_{n=1,\ldots,N_m,\;o\in\mathcal{O}}$.

\ourmethod extends \priorart~\citep{tighineanu2024scalable} to combine meta-learning with multi-objective optimization, inheriting its task scalability, modularity, and principled uncertainty propagation.
\ourmethod satisfies requirements (i)--(iii) of \cref{sec:introduction} through three design choices: a per-objective additive decomposition with objective-specific weights $w_{mo}$ leading to a closed-form prior after conditioning on metadata in \cref{th:prior_target}, satisfying (i); a structured-coupling assumption that keeps the cross-task parameter count linear in $M$ in \cref{as:meta_target_correlations}, satisfying (ii); and propagating full objective-objective covariance blocks from \cref{as:meta_target_correlations} to \cref{th:prior_target}, satisfying (iii).
In the following, we focus the presentation on the novel components of \ourmethod (the multi-output extension of \cref{as:meta_target_correlations}, objective-specific weights, and the target-task prior with full objective-objective covariance), referring the reader to \citet{tighineanu2024scalable} for a detailed discussion on the shared components (\cref{as:independent_meta,as:hp_independence}).

\subsection{The \texttt{SMOG} kernel}\label{subsec:smog-kernel}
We assume that meta- and target data are described by a multi-task kernel over all tasks and objectives
\begin{equation}
k\left[(\bm{x}, \nu, o), (\bm{x'}, \nu', o')\right] = \sum_{v\in\mathcal{M}^*}\sum_{\theta\in\mathcal{O}}\left[c_{v\theta}\right]_{\nu o,\nu'o'}k_{v\theta}(\bm{x},\bm{x}'),
\label{eq:multitask_kernel}
\end{equation}
where $\mathcal{M}^*=\mathcal{M}\cup{\{t\}}$ is the set of all tasks, $t$ denotes the target index, $\nu,\nu'\in\mathcal{M}^*$ are task indices, $o,o'\in\mathcal{O}$ are indices of the objective, $k_{v\theta}$ are arbitrary kernel functions, and $\bm{C}_{v\theta}$ \ac{PSD} matrices called \emph{coregionalization matrices}, since their entries $\left[c_{v\theta}\right]_{\nu o,\nu'o'}$ model the covariances between two objectives $(\nu, o)$ and $(\nu', o')$ \citep{alvarez2012kernels}.
We impose two assumptions on the multi-task \ac{GP} in \cref{eq:multitask_kernel} that focus learning on the most informative covariance terms.

\begin{assumption}
	Distinct meta-task models are uncorrelated: $\mathrm{Cov}\left(f_{mo}, f_{m'o'}\right) = \delta_{m=m'}k_m\left[(\bm{x},o),(\bm{x'},o')\right]$ for all $m,m' \in \metaset$ and $o,o'\in\mathcal{O}$.
	\label{as:independent_meta}
\end{assumption}

\ourmethod inherits \Cref{as:independent_meta} from \priorart.
It gives scalability in $M$, but by itself does not yield a practical method, since the target-meta couplings can still involve all meta-task kernels and the number of hyperparameters grows quadratically with $M$.
We thus impose a second structural assumption to obtain a practical model in the low-data regime.
\begin{assumption}
	The target-task model is given by a sum of scaled meta-task functions, $\sum_{m\in\mathcal{M}}\tilde{f}_{mo}$, and a residual function, $\tilde{f}_{to}$. 
    Explicitly, $ f_{to} = \tilde{f}_{to} + \sum_{m\in\mathcal{M}}\tilde{f}_{mo} $, with $\left| \mathrm{Corr}(\tilde{f}_{mo}, f_{mo}) \right| = 1$, $\mathrm{Cov}(\tilde{f}_{to}, f_{mo'})=0$, and $\mathrm{Cov}(\tilde{f}_{to},\tilde{f}_{to'})=k_t\left[(\bm{x},o),(\bm{x'},o')\right]$, $m\in\mathcal{M}$, $o,o'\in\mathcal{O}$.
	\label{as:meta_target_correlations}
\end{assumption}

We dissect \cref{as:meta_target_correlations} component by component, and discuss where \ourmethod's novelty lies.
(i)~The additive target decomposition, $f_{to} = \tilde f_{to} + \sum_{m\in\mathcal M}\tilde f_{mo}$, acknowledges that meta-learning may be imperfect and requires a residual target-specific component. Here, \ourmethod extends \priorart's formulation to a per-objective decomposition. Cross-objective dependencies are neglected to enforce explainability: in practice, different objectives may correspond to distinct physical units, and combining them can lead to an unclear representation.
(ii)~The structured-coupling assumption, $\mathrm{Cov}(\tilde f_{to}, f_{mo'}) = 0$, follows \priorart's approach and prevents target-meta couplings from mixing all meta-task kernels, keeping hyperparameter count under control.
(iii)~The link to actual meta-task functions, $|\mathrm{Corr}(\tilde f_{mo}, f_{mo})| = 1$, implies $\tilde f_{mo} = w_{mo} f_{mo}$ and yields the practical form $f_{to} = \tilde f_{to} + \sum_{m} w_{mo} f_{mo}$ with $w_{mo} \in \mathbb{R}$. \ourmethod's generalization discards cross-objective contributions for the same reasons as in (i). This meta-function link is a specific choice of the \ourmethod/\priorart family; other links are possible, but they typically lead to a less interpretable target-task prior in \cref{th:prior_target}. We leave the exploration of such alternatives to future work.

\begin{restatable}{lemma}{coregmatrices}
Applying \cref{as:independent_meta,as:meta_target_correlations} to \cref{eq:multitask_kernel} yields a sparse structure with the following non-zero entries of the coregionalization matrices
\begin{equation}
\begin{aligned}
[c_{t\theta}]_{to,to'} &= [h_{t\theta}]_{oo'}, &
[c_{m\theta}]_{mo,mo'} &= [h_{m\theta}]_{oo'}, \\
[c_{m\theta}]_{mo,to'} &= w_{mo'}[h_{m\theta}]_{oo'}, &
[c_{m\theta}]_{to,to'} &= w_{mo}w_{mo'}[h_{m\theta}]_{oo'} ,
\end{aligned}
\label{eq:coreg_matrices_after_assumptions}
\end{equation}
where $[c_{\cdot\theta}]_{a,b}=[c_{\cdot\theta}]_{b,a}$ and $\left[h_{v\theta}\right]_{oo'}$ are \ac{PSD} in the basis $(o,o')$.
\label{lemma:coreg_matrices}
\end{restatable}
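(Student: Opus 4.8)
The plan is to translate the two assumptions into pointwise covariance identities, expand every task--objective covariance $\mathrm{Cov}(f_{\nu o},f_{\nu'o'})$ by bilinearity, and then read off the coregionalization entries by matching term-by-term against the expansion in \cref{eq:multitask_kernel}. First I would fix the per-task building blocks: since each $k_m$ and $k_t$ is itself a valid multi-output kernel, write it in linear-coregionalization form $k_v[(\x,o),(\x',o')]=\sum_{\theta\in\mathcal{O}}[h_{v\theta}]_{oo'}\,k_{v\theta}(\x,\x')$ with every $h_{v\theta}$ PSD in the objective indices. This simultaneously identifies the component kernels $k_{v\theta}$ that appear in \cref{eq:multitask_kernel} and supplies the PSD claim at the end.

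The key reduction is to read the perfect-correlation constraint of \cref{as:meta_target_correlations}, $|\mathrm{Corr}(\tilde f_{mo},f_{mo})|=1$, as a deterministic scaling $\tilde f_{mo}=w_{mo}f_{mo}$ for a single scalar $w_{mo}\in\mathbb{R}$, the sign of $w_{mo}$ absorbing the $\pm1$ ambiguity. Because this scales the whole process, it acts uniformly across all latent components $\theta$, so any covariance involving $\tilde f_{mo}$ simply pulls out the corresponding $w$-factor. Combined with \cref{as:independent_meta}, i.e.\ $\mathrm{Cov}(f_{mo},f_{m'o'})=\delta_{m=m'}k_m$, and the residual orthogonality $\mathrm{Cov}(\tilde f_{to},f_{mo'})=0$, every required covariance becomes elementary.

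I would then compute the four cases. For two target indices, expanding $f_{to}=\tilde f_{to}+\sum_m\tilde f_{mo}$ kills all cross terms except the residual and the diagonal $m=m'$ terms, giving $\mathrm{Cov}(f_{to},f_{to'})=k_t+\sum_m w_{mo}w_{mo'}k_m$; matching against $\sum_{v,\theta}[c_{v\theta}]_{to,to'}k_{v\theta}$ yields $[c_{t\theta}]_{to,to'}=[h_{t\theta}]_{oo'}$ and $[c_{m\theta}]_{to,to'}=w_{mo}w_{mo'}[h_{m\theta}]_{oo'}$. A meta index against itself gives $[c_{m\theta}]_{mo,mo'}=[h_{m\theta}]_{oo'}$ directly from \cref{as:independent_meta}, while two distinct meta indices have vanishing covariance, forcing those entries to zero. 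A meta index against the target leaves only the $m'=m$ summand, producing $[c_{m\theta}]_{mo,to'}=w_{mo'}[h_{m\theta}]_{oo'}$, with the transposed entry fixed by symmetry of the kernel. These four identities are exactly \cref{eq:coreg_matrices_after_assumptions}.

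Finally I would record the PSD structure. The target block is $h_{t\theta}$ padded with zeros, hence PSD. For the meta component, ordering objectives within a meta-then-target task pair, the relevant block of $c_{m\theta}$ factors as $\begin{pmatrix}I\\W_m\end{pmatrix}h_{m\theta}\begin{pmatrix}I&W_m\end{pmatrix}$ with $W_m=\diag(w_{m1},\dots,w_{mO})$, so it inherits PSD-ness from $h_{m\theta}$. The main obstacle is the conceptual step at the start: justifying that unit correlation between the two processes forces a single, input- and component-independent scalar $w_{mo}$, since once that deterministic relation is in hand everything downstream is bilinear bookkeeping.
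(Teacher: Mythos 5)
Your proposal is correct and follows essentially the same route as the paper's proof: translate \cref{as:independent_meta,as:meta_target_correlations} into pointwise covariance identities, use bilinearity together with the fact that $|\mathrm{Corr}(\tilde f_{mo},f_{mo})|=1$ forces the linear relation $\tilde f_{mo}=w_{mo}f_{mo}$ (the paper allows an additive constant, which is immaterial for covariances), compute the four task-pair cases, and match coefficients of $k_{v\theta}$ to read off \cref{eq:coreg_matrices_after_assumptions}. Your explicit factorization of the meta-block as $\bigl(\begin{smallmatrix}I\\W_m\end{smallmatrix}\bigr)h_{m\theta}\bigl(\begin{smallmatrix}I&W_m\end{smallmatrix}\bigr)$ is a slightly sharper way to certify positive semi-definiteness than the paper's appeal to the entries being elements of the original \ac{MTGP}, and it anticipates the outer-product argument the paper uses later in the proof of \cref{lemma:joint_smog_kernel}.
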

See \cref{app:coreg_matrices} for a proof and an illustrative example. 
This sparse structure decreases the number of kernel parameters from $\mathcal{O}\left[M^3O^3\right]$ in \cref{eq:multitask_kernel} to $\mathcal{O}\left[MO^3\right]$ in \cref{eq:coreg_matrices_after_assumptions}, which is linear in $M$. In other words, the coregionalization matrices are now populated by $O\times O$ blocks of $\left[h_{v\theta}\right]_{oo'}$.

\begin{restatable}{lemma}{jointkernel}
\cref{as:independent_meta,as:meta_target_correlations} with $w_{mo}\in\mathbb{R}$ for $m\in\mathbb{R}$ and $o\in\mathcal{O}$ yield a valid kernel given by
\begin{equation}
\begin{split}
k_\ourmethod&\left[(\bm{x},\nu,o), (\bm{x'},\nu',o')\right] = \sum_{v\in\mathcal{M}^*}g_{v}(\nu,o)g_{v}(\nu',o')k_v\left[(\bm{x},o),(\bm{x'},o')\right],
\end{split}
\label{eq:kernel_joint_final_index}
\end{equation}
where $g_v(\nu,o)$ is one if $v=\nu$, $w_{mo}$ if $(v = m) \land (\nu=t)$, and zero otherwise.
\label{lemma:joint_smog_kernel}
\end{restatable}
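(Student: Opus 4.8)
The plan is to substitute the sparse coregionalization structure of \cref{lemma:coreg_matrices} into the generic multi-task kernel \cref{eq:multitask_kernel} and show that the resulting double sum over tasks $v\in\metaset^*$ and objectives $\theta\in\mathcal{O}$ collapses into the per-task, rank-structured factorization of \cref{eq:kernel_joint_final_index}. Throughout I would work with the per-task multi-output kernel $k_v[(\bm{x},o),(\bm{x'},o')] = \sum_{\theta\in\mathcal{O}}[h_{v\theta}]_{oo'}\,k_{v\theta}(\bm{x},\bm{x}')$; this is exactly the within-task covariance produced on the diagonal blocks of \cref{eq:coreg_matrices_after_assumptions}, and by \cref{as:independent_meta,as:meta_target_correlations} it coincides with the per-task covariances $k_m$ and $k_t$ posited there.

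The crux is a single algebraic identity: for every task $v$, objective pair $(o,o')$, and task-index pair $(\nu,\nu')$, the coregionalization entry factors as
\[
[c_{v\theta}]_{\nu o,\nu' o'} = g_v(\nu,o)\,g_v(\nu',o')\,[h_{v\theta}]_{oo'}.
\]
I would verify this by walking through the non-zero blocks of $\bm{C}_{v\theta}$ listed in \cref{eq:coreg_matrices_after_assumptions}. For $v=m$ a meta-task: the meta--meta block ($\nu=\nu'=m$) gives $g_m(m,o)g_m(m,o')=1$; the mixed meta--target block ($\nu=m,\nu'=t$) gives $g_m(m,o)g_m(t,o')=w_{mo'}$, with its transpose matched by the symmetry of the \ac{PSD} block $[h_{v\theta}]_{oo'}$; and the target--target contribution of meta-task $m$ ($\nu=\nu'=t$) gives $g_m(t,o)g_m(t,o')=w_{mo}w_{mo'}$. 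For $v=t$ only the $(\nu=\nu'=t)$ block survives, where $g_t(t,o)g_t(t,o')=1$. The product $g_v(\nu,o)g_v(\nu',o')$ vanishes unless each of $\nu,\nu'$ equals $v$ or the target index $t$ (the latter only when $v$ is a meta-task), which reproduces the support of $\bm{C}_{v\theta}$ and, in particular, the cross-meta-task independence of \cref{as:independent_meta}. The one point demanding care is the piecewise definition of $g_v$: one must check that the weight $w_{mo}$ attaches to the target index $\nu=t$ and never to a meta index.

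Given the identity, the result follows by pulling the index-dependent factors out of the inner sum: substituting into \cref{eq:multitask_kernel} and using that $g_v(\nu,o)g_v(\nu',o')$ does not depend on $\theta$ reassembles $\sum_{\theta}[h_{v\theta}]_{oo'}k_{v\theta}(\bm{x},\bm{x}') = k_v[(\bm{x},o),(\bm{x'},o')]$ inside each $v$-term, leaving exactly the claimed form \cref{eq:kernel_joint_final_index}.

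It remains to argue validity, i.e.\ that $k_\ourmethod$ is \ac{PSD}, which I would do term by term in $v$. Each per-task kernel $k_v$ is a sum of separable intrinsic-coregionalization kernels $[h_{v\theta}]_{oo'}k_{v\theta}$ with $[h_{v\theta}]_{oo'}$ \ac{PSD} (\cref{lemma:coreg_matrices}) and $k_{v\theta}$ a valid base kernel, hence valid. The index-dependent scaling preserves validity: any Gram matrix of the $v$-th summand has the form $\bm{D}_v\bm{G}_v\bm{D}_v^\top$ with $\bm{D}_v$ diagonal (entries $g_v(\nu,o)$) and $\bm{G}_v$ the \ac{PSD} Gram matrix of $k_v$, equivalently $\phi_v(\bm{x},\nu,o)=g_v(\nu,o)\psi_v(\bm{x},o)$ is a feature map for it; summing the $|\metaset^*|$ \ac{PSD} terms gives a \ac{PSD}, hence valid, kernel. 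The main obstacle is thus combinatorial rather than analytic: the careful case check that the block pattern of \cref{eq:coreg_matrices_after_assumptions} is reproduced exactly by the outer product $g_v(\nu,o)g_v(\nu',o')$, combined with the observation that this very outer-product form is what guarantees positive semi-definiteness.
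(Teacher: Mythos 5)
Your proposal is correct and follows essentially the same route as the paper: both factor the coregionalization entries of \cref{lemma:coreg_matrices} as the outer product $g_v(\nu,o)g_v(\nu',o')$ (the paper writes this in matrix form as $\bm{C}_{v\theta}=\bm{w}_v\bm{w}_v^{T}\boxtimes\bm{H}_{v\theta}$ with $[\bm{w}_v]_{\nu o}=g_v(\nu,o)$), substitute into \cref{eq:multitask_kernel} to collapse the sum, and use that same rank-one structure to establish positive semi-definiteness. Your diagonal-congruence/feature-map argument for validity is just the index-notation counterpart of the paper's ``$\bm{W}_v$ and $\bm{K}_v$ are both PSD'' step, so no substantive difference remains.
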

See \cref{app:smog_joint_kernel} for a proof. By \cref{lemma:joint_smog_kernel}, we have a valid joint kernel defining the prior distribution over all meta- and target functions \emph{before} observing any data. 
The number of parameters in our kernel scales linearly with $M$. 
The modular and scalable nature of \ourmethod is revealed when conditioning \cref{eq:kernel_joint_final_index} on the metadata, yielding a valid \ac{GP}, as we show next.
\begin{restatable}{theorem}{targetprior}
	Under a zero-mean \ac{GP} prior with the multi-task kernel given by \cref{eq:kernel_joint_final_index}, the distribution of the target-task objectives conditioned on the metadata is 
	\begin{equation}
	f_{to} \mid \mathcal{D}_{1:M}, \bm{x} \sim \mathcal{GP}(m_{t,\ourmethod}(\bm{x},o), k_{t,\ourmethod}[(\bm{x}, o), (\bm{x'},o')])
	\end{equation}
	with
    \begin{equation}
    \begin{split}
    m_{t,\ourmethod}(\bm{x}, o) &= \sum_{m\in\mathcal{M}}w_{mo}\hat{m}_{mo}(\bm{x}), \\
	\kk[t,\ourmethod]{(\bm{x}, o), (\bm{x'}, o')} &= \kk[t]{(\bm{x},o),(\bm{x'},o')}+ \sum_{m\in\mathcal{M}}w_{mo}w_{mo'}\hat{k}_{moo'}(\bm{x}, \bm{x'}),
    \label{eq:target_prior}
    \end{split}
    \end{equation}
	where $\hat{m}_{mo}(\bm{x})$ and $\hat{k}_{moo'}(\bm{x},\bm{x'})$ are the posterior mean and covariance functions of the individual multi-output meta-task \acp{GP} conditioned only on their corresponding data, $\mathcal{D}_m$.
	\label{th:prior_target}
\end{restatable}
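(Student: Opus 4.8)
The plan is to work directly from the generative model encoded by \cref{as:independent_meta,as:meta_target_correlations} rather than from the kernel formula of \cref{lemma:joint_smog_kernel}, since the additive structure makes conditioning transparent. \cref{as:meta_target_correlations} lets us write the target objective as $f_{to} = \tilde f_{to} + \sum_{m\in\metaset} w_{mo} f_{mo}$, where the perfect-correlation constraint $|\corr{\tilde f_{mo}, f_{mo}}| = 1$ together with the single free scale $w_{mo}$ identifies $\tilde f_{mo} = w_{mo} f_{mo}$ (the real parameter $w_{mo}$ absorbing the sign), the residual $\tilde f_{to}$ is a zero-mean GP with kernel $k_t$ satisfying $\cov{\tilde f_{to}, f_{mo'}} = 0$, and by \cref{as:independent_meta} the meta-task functions $\{f_m\}_{m\in\metaset}$ are mutually independent in the prior.

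First I would establish that conditioning on the full metadata $\mathcal{D}_{1:M}$ decouples into independent per-task posteriors. Each meta dataset $\mathcal{D}_m$ consists of noisy observations $\bm y_{m,n} = \bm f_m(\bm x_{m,n}) + \bm\epsilon_{m,n}$ of $f_m$ alone, with noise independent across $m$ and $n$, so the likelihood factorizes as $p(\mathcal{D}_{1:M}\mid\{f_m\}) = \prod_{m} p(\mathcal{D}_m\mid f_m)$. Combined with the prior factorization from \cref{as:independent_meta}, Bayes' rule gives $p(\{f_m\}\mid\mathcal{D}_{1:M}) = \prod_m p(f_m\mid\mathcal{D}_m)$; that is, the meta-task posteriors remain mutually independent, and each equals the standard single-task GP posterior obtained by applying \cref{eq:gp_posterior} to task $m$, with mean $\hat m_{mo}$ and covariance $\hat k_{moo'}$. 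Because $\tilde f_t$ is a priori independent of every $f_m$ and does not enter any meta observation, its posterior equals its prior $\mathcal{GP}(0, k_t)$ and it stays independent of the meta-task posteriors.

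Then I would compute the two moments of $f_{to} = \tilde f_{to} + \sum_m w_{mo} f_{mo}$ under this factorized posterior. Linearity of expectation and the zero posterior mean of $\tilde f_{to}$ give $m_{t,\ourmethod}(\bm x, o) = \sum_m w_{mo}\hat m_{mo}(\bm x)$. For the covariance, bilinearity expands $\cov{f_{to}(\bm x), f_{to'}(\bm x')\mid\mathcal{D}_{1:M}}$ into the residual term, the meta diagonal terms, and cross terms; the residual--meta cross terms vanish by independence of $\tilde f_t$, and the inter-meta cross terms ($m\neq m'$) vanish by mutual independence of the meta-task posteriors, leaving $k_t[(\bm x,o),(\bm x',o')] + \sum_m w_{mo}w_{mo'}\hat k_{moo'}(\bm x,\bm x')$, exactly \cref{eq:target_prior}. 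Gaussianity is inherited because the joint law is a GP (\cref{lemma:joint_smog_kernel}) and conditioning a GP on a subset of variables again yields a GP, which its first two moments fully determine.

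The main obstacle is the factorization step: justifying that conditioning jointly on $\mathcal{D}_{1:M}$ is equivalent to conditioning each $f_m$ on $\mathcal{D}_m$ in isolation while leaving $\tilde f_t$ at its prior. This is exactly where both assumptions do their work---prior independence of the meta-tasks and independence of the residual from all meta functions and observations---and it is what makes the resulting surrogate modular and cacheable. Once this decoupling is in hand, the moment computations are routine applications of linearity and bilinearity of covariance, so I would not belabor them.
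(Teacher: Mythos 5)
Your proposal is correct, but it takes a genuinely different route from the paper's proof. The paper works at the level of finite-dimensional Gaussian algebra: it writes down the joint zero-mean Gaussian of $(\bm{Y}_1,\dots,\bm{Y}_M, f_{to}(\x))$ with the blocks read off from the kernel of \cref{lemma:joint_smog_kernel}, applies the standard Gaussian conditioning formulas, and then uses the block-diagonality of the meta-covariance matrix to split the resulting matrix products into per-task sums, recognizing each summand as the per-task posterior mean $\hat{m}_{mo}$ and covariance $\hat{k}_{moo'}$. You instead argue structurally from the generative model of \cref{as:independent_meta,as:meta_target_correlations}: the likelihood factorizes over meta-tasks, the prior factorizes by \cref{as:independent_meta} (using that zero covariance between jointly Gaussian blocks implies independence), so Bayes' rule gives $p(\{f_m\}\mid\mathcal{D}_{1:M})=\prod_m p(f_m\mid\mathcal{D}_m)$ with the residual $\tilde f_{to}$ untouched at its prior, after which the moments of $f_{to}=\tilde f_{to}+\sum_m w_{mo}f_{mo}$ follow by linearity and bilinearity. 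Both arguments are sound and hinge on the same two structural facts (prior independence across meta-tasks and the additive, perfectly correlated target decomposition), but they package them differently. Your route is more conceptual: the posterior factorization \emph{is} the modularity claim, so the cacheability of the meta-posteriors appears as the main lemma rather than as a by-product of inverting a block-diagonal matrix, and no explicit block matrices are needed. The paper's route is more mechanical but also more self-contained relative to the theorem statement: it starts from the kernel in \cref{eq:kernel_joint_final_index} (the object the theorem actually references), needs no re-derivation of the representation $\tilde f_{mo}=w_{mo}f_{mo}$ from the correlation constraint, and its intermediate matrix expressions map one-to-one onto the cached quantities used in the implementation. Two small points you should make explicit if you write this up: (i) your factorization step silently invokes joint Gaussianity of the whole prior (which is what \cref{lemma:joint_smog_kernel} provides) to pass from uncorrelatedness to independence, and (ii) under the zero-mean prior the additive constant allowed by $\left|\corr{\tilde f_{mo},f_{mo}}\right|=1$ is forced to vanish, which is why $\tilde f_{mo}=w_{mo}f_{mo}$ holds exactly rather than up to a constant.
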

See \cref{app:target_prior} for a proof. 
According to \cref{th:prior_target}, the prior distribution of the target-task is also a \ac{GP}, given by the weighted sum of the meta-task posteriors. 
The prior mean function is analogous to \priorart~\citep{tighineanu2024scalable}, with a per-objective weighting of meta-task posterior means. \ourmethod's covariance prior is a generalization of \priorart's: while the latter weighs per-task posterior covariances with a scalar $w_m^2$, \ourmethod uses products $w_{mo}w_{mo'}$ that couple the full objective--objective covariance blocks. 
\ourmethod's residual kernel, $k_t$, learns correlations that cannot be captured by the meta-learned contribution alone. 
According to \cref{eq:target_prior}, meta-tasks that align with the target receive large weights, while unrelated ones are downweighted. 
\ourmethod can therefore quickly learn in the presence of even a few similar meta-tasks. 
The prior in \cref{eq:target_prior} is conditioned on $\testdata$ via \cref{eq:gp_posterior} to obtain the target-task posterior.
If the objectives are independent, \ourmethod reduces to describing each objective with a \priorart model.
\begin{figure}[t]
    \centering
    \includegraphics[width=\linewidth]{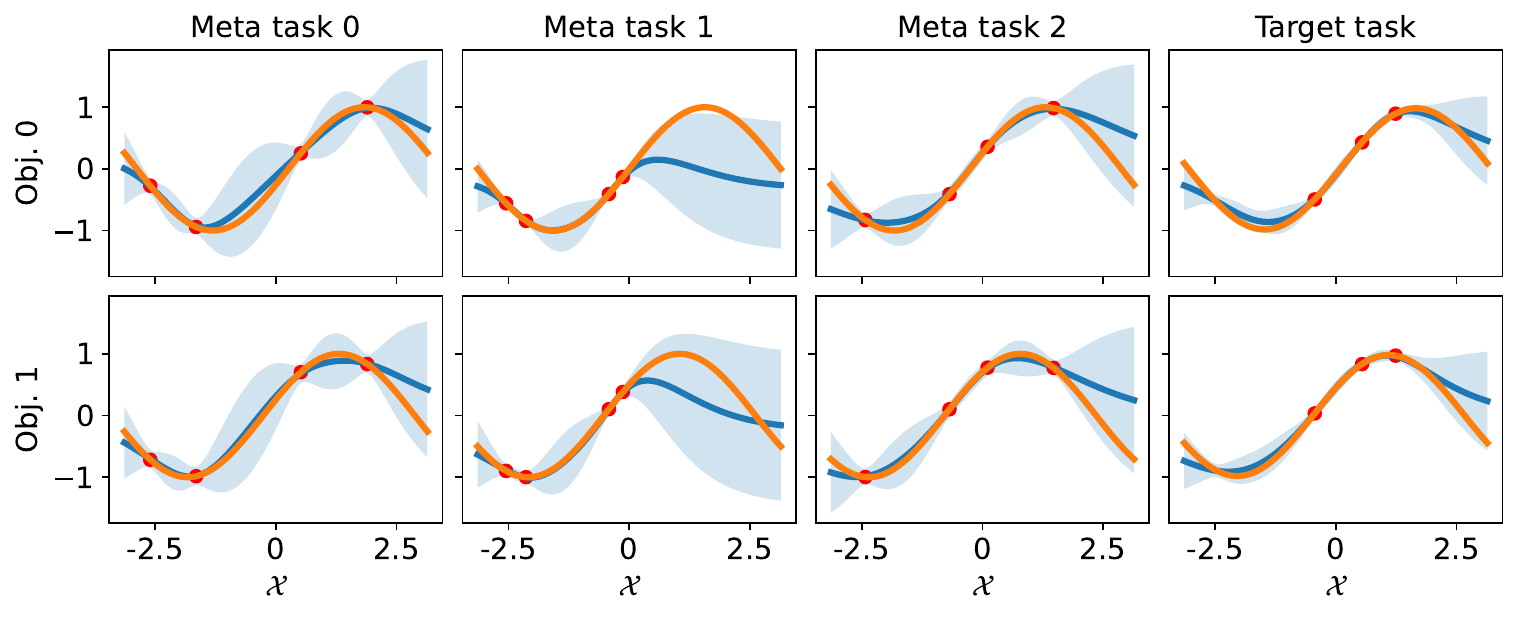}
    \caption{Example of a Sinusoidal function with two outputs (rows), three source tasks (columns 1--3), and one target task (column 4). \ourmethod learns a strong target-task posterior (solid blue line and shaded areas) of the objective function (solid orange line) by leveraging meta data (red dots for meta tasks 0--2) to learn an informative target-task prior, which is further refined by conditioning on the target data (red dots in the rightmost column). See \cref{app:benchmark-details} for details on the benchmark.}
    \label{fig:sinusoidal_results}
\end{figure}
\Cref{fig:sinusoidal_results} visualizes how \ourmethod meta-learns on multi-objective problems.

\parahead{Complexity analysis.} 
A key feature of \ourmethod is that its computational complexity scales linearly in $M$. 
Consider the cost incurred by the different steps of \cref{alg:overview}. 
Line~\ref{alg:overview:meta-gps} involves the inversion of meta-task data with an overhead $\mathcal{O}(O^3\sum_{\meta\in\metaset}N_m^3)$. 
This happens only once during pre-training, the results are cached.
To construct the prior in line~\ref{alg:overview:test-prior}, we evaluate the meta-task posterior at $\bm{X}_\test$ with an overhead $\mathcal{O}(O^3N_\test^2\sum_{m\in\mathcal{M}}N_\meta + O^3N_\test\sum_{m\in\mathcal{M}}N_\meta^2)$. 
Finally, line~\ref{alg:overview:test-likelihood} involves inverting the target-task kernel matrix, taking $\mathcal{O}(O^3N_\test^3)$. 
All these terms scale linearly in $M$ and are cheap to evaluate in the practically relevant regime of small $N_t$ and $O$, and moderate data per meta-task.
We empirically validate this analysis in \cref{app:runtime}.

\subsection{Hyperparameter inference} 
As for \priorart, we assume that the \acp{HP} of each meta-task model, $\bm{\theta}_m$, are independent of the target data~\citep{bayarri2009modularization}.
\begin{assumption}
	For all meta-tasks $m\in\mathcal{M}$, $p(\bm{\theta}_m \mid \mathcal{D}_m,\testdata) = p(\bm{\theta}_m \mid \mathcal{D}_m)$.
	\label{as:hp_independence}
\end{assumption}
\cref{as:hp_independence} expresses a no-feedback modularization: meta-task \acp{HP} are inferred without feedback from the target data. 
In the regime we consider, where target data are scarce relative to the metadata, this reduces undesirable target-to-meta feedback and leads to a modular posterior
\begin{equation}
    p(\bm{\theta}_{1:\nummeta}, \bm{\theta}_\test \mid \mathcal{D}) = \left[\prod_{m=1}^Mp(\bm{\theta}_m \mid \mathcal{D}_m)\right]p(\bm{\theta}_\test \mid \mathcal{D}),
    \label{eq:modularized_posterior}
\end{equation}
with $\mathcal{D} = \testdata \cup \metadata$. 
Being a \ac{GP}, \ourmethod is amenable to fully Bayesian inference. 
In our experiments, however, we follow a modular plug-in approximation for the maximum-likelihood / MAP~\citep{murphy2002estimation}:
\begin{equation}
    \bm{\theta}_m^\star = \argmax_{\theta_m}\log p(\bm{Y}_m \mid \bm{X}_m, \bm{\theta}_m), \quad \forall m\in\mathcal{M},
    \label{eq:meta_likelihood_optimization}
\end{equation}
followed by
\begin{equation}
    \bm{\theta}_\test^\star = \argmax_{\bm{\theta}_\test}\log p(\bm{Y}_\test \mid \mathcal{D}_{1:\nummeta}, \bm{X}_\test, \bm{\theta}_\test, \bm{\theta}_{1:\nummeta}^\star).
    \label{eq:target_likelihood_optimization}
\end{equation}
This two-stage optimization does not recover the maximizer of the joint \ac{MLL} over $(\bm{\theta}_{1:\nummeta}, \bm{\theta}_\test)$; rather, it is the optimization counterpart of the modularized posterior in \cref{eq:modularized_posterior}. 
We summarize \ourmethod in \cref{alg:overview} and show a conceptual diagram in \cref{fig:smog-diagram}.

\begin{algorithm}[tb]
	\caption{\ourmethod}
	\label{alg:overview}
	\begin{algorithmic}[1]
		\State {\bfseries Input:} metadata $\metadata = \cup_{m \in \metaset} \mathcal{D}_m$ \label{alg:overview:input}
        \State {\bfseries Output:} meta-learned multi-objective model $\mathcal{GP}(\testdata,\metadata)$
		\State Train individual \acp{GP} per meta-task and optimize $\bm{\theta}_m$ \label{alg:overview:meta-gps}
		\State Construct the target-task prior as in \cref{eq:target_prior}, and cache $\bm{\hat{m}}_{mo}(\bm{X}_\test)$ and $\bm{\hat{K}}_{moo'}(\bm{X}_\test, \bm{X}_\test)$ \label{alg:overview:test-prior}
		\State Optimize the target-task \acp{HP} $\bm{\theta}_\test$ as in \cref{eq:target_likelihood_optimization} \label{alg:overview:test-likelihood}
		\State Condition the prior on $\testdata$ to obtain the posterior distribution for $\bm{f}_\test$ as in \cref{eq:gp_posterior} \label{alg:overview:test-posterior}
	\end{algorithmic}
\end{algorithm}

\subsection{Equicorrelated kernel}\label{subsec:equicorrelated}
\ourmethod has a modular structure: conditioning on the metadata collapses the original monolithic \ac{GP} into $M+1$ interacting \acp{GP} in \cref{th:prior_target}. The user has full flexibility to design these individual modules depending on prior knowledge or requirements. In our experiments, we use a conventional Kronecker structure for the meta-task \acp{GP}~\cite{bonilla2007multi}. To model the correlations of residuals of the target task, we adopt a separable Kronecker form $\mathrm{Cov}(\tilde{f}_{to},\tilde{f}_{to'}) = k_{\textrm{Obs}}(\bm{x}, \bm{x}')\, K_\textrm{Obj}[o, o']$ in which $k_{\textrm{Obs}}$ is a within-task input kernel and $K_\textrm{Obj} \in \mathbb{R}^{O \times O}$ is an objective-objective coupling matrix.
A general $K_\textrm{Obj}$ has $\mathcal{O}(O^2)$ free entries. 
For the target-task budgets, we aim for $\mathcal{O}(\text{tens})$ observations, which impedes the identification of free entries.
We therefore restrict $K_\textrm{Obj}$ to an \emph{equicorrelated} form
\begin{equation}
K_\textrm{Obj}[i, j] =
\begin{cases}
\sigma_i^2, & i = j, \\
\rho\, \sigma_i\, \sigma_j, & i \neq j,
\end{cases}
\qquad \rho \in \left[-\tfrac{1}{O-1},\, 1\right],
\label{eq:equicorrelated}
\end{equation}
in which a single correlation parameter $\rho$ is shared across all pairs of distinct objective residuals of the target task, and each objective retains its own marginal scale $\sigma_i$.
This admits the rank-one-plus-diagonal decomposition $K_\textrm{Obj} = (1{-}\rho)\,\mathrm{diag}(\bm{\sigma}^2) + \rho\, \bm{\sigma}\bm{\sigma}^\top$, is \ac{PSD} on the stated $\rho$ interval, and reduces the free parameters of $K_\textrm{Obj}$ from $\mathcal{O}(O^2)$ to $O + 1$.
Compared to the generic low-rank form $K_\textrm{Obj} = BB^\top + \mathrm{diag}(\bm{v})$ with $B \in \mathbb{R}^{O \times O}$, equicorrelation imposes a single shared cross-objective correlation of the target task residuals while remaining identifiable on the data budgets that motivate meta-learning in the first place.
We treat $\rho$ as a hyperparameter with a Beta prior (\cref{app:experimental-setup}) and fit it jointly with the per-objective scales $\sigma_i$ during target-task \ac{GP} training.

\section{Experimental evaluation}
\label{sec:experiments}

We study the performance of \ourmethod relative to a wide range of optimization algorithms across benchmarks that reflect synthetic and real-world scenarios.
We initialize each optimizer with a single uniformly randomly selected configuration, reflecting the data scarcity in meta-learning scenarios, and run each optimizer $50$ times with different random seeds unless stated otherwise.
Every iteration following the initial sample uses \texttt{LogExpectedHypervolumeImprovement}~\citep{ament2023unexpected} as the \acl{AF}; we study the sensitivity to this choice in \cref{app:acqfunc-comparison}.
\texttt{FCNetTabularBenchmark} has only two objectives and was therefore excluded from all 4-objective experiments.
Further details on the compute environment, reference point, hyperpriors, and marginal-likelihood optimization are provided in \cref{app:experimental-setup}; mixed-space acquisition function optimization is described in \cref{app:mixed-space-af}.

For each group of runs sharing the same benchmark, number of objectives, and target task, we normalize the observed objective values to $[0,1]$ per objective using group-wide minimum and maximum values computed from all pooled observations, and set the \ac{HV} reference point to the origin in this normalized space (corresponding to the worst observed value per objective across all runs).
The \emph{\ac{HV} gap} is $1 - \widehat{\mathrm{HV}}_t$, where $\widehat{\mathrm{HV}}_t \in [0,1]$ is the normalized \ac{HV} of the Pareto front built from all solutions found up to \ac{BO} iteration~$t$; lower values indicate a better Pareto front.
The \emph{cumulative \ac{HV} regret} $\sum_{s=1}^{t}(1 - \widehat{\mathrm{HV}}_s)$ is the cumulative sum of per-iteration \ac{HV} gaps and penalizes algorithms that discover good fronts late, rewarding fast convergence.

\subsection{Benchmarks}\label{subsec:benchmarks}
We evaluate the performance of \ourmethod in controlled synthetic and real-world settings.
We use 8 meta-tasks and 16 observations per meta-task in the main text, unless otherwise stated.

\parahead{Adapted \hartmann benchmark.}
We define an adapted variant of the \hartmann benchmark to allow for meaningful multi-task, multi-objective optimization.
To obtain a multi-task multi-objective setting, we (i) sample a separate coefficient vector $\bm{\alpha}_m$ per task, yielding related but distinct tasks, and (ii) apply small objective-specific input shifts $\bm{\varepsilon}_o$ so that different objectives attain their optima at different locations.
We additionally study the behavior of \ourmethod for different numbers of meta tasks and observations per meta task in Appendix~\ref{app:ablations}.
A complete specification is provided in Appendix~\ref{app:benchmark-details}.

\parahead{Adapted \branincurrin benchmark.}
We consider an adapted variant of the 2D \branincurrin multi-objective benchmark~\citep{irshad2024leveraging}.
To obtain a multi-task multi-objective setting, we (i) perturb the underlying function parameters independently per task and objective, yielding related but distinct tasks, and (ii) apply small objective-specific input shifts so that objectives attain their optima at different locations.
We report objectives in a maximization form (negated values) to match standard hypervolume-based evaluation.
A complete specification is provided in Appendix~\ref{app:benchmark-details}.

\parahead{Tabular HPO benchmarks.}
To evaluate \ ourmethod's performance in real-world settings, we investigate it on HPOBench benchmarks~\cite{klein2019tabular}.
The goal in HPOBench is to jointly optimize a neural network architecture and its hyperparameters.
The performance of a configuration can be observed on four different datasets: \textit{Slice Localization}, \textit{Protein Structure}, \textit{Naval Propulsion}, and \textit{Parkinson's Telemonitoring}.
The benchmark has two objectives: validation MSE and runtime.
We always run for the highest-fidelity setting, corresponding to the maximum number of epochs (100). 
We use one dataset as the target task, the other three as meta tasks, and 16 observations per meta task.

\parahead{Terrain benchmark.}
In this benchmark, we study \ac{UAV} trajectory optimization problems~\cite{shehadeh2025benchmarking}.
The goal is to find a trajectory of 20 three-dimensional waypoints through one of 56 predefined landscapes that minimizes four target metrics: path length cost, obstacle avoidance cost, altitude cost, and smoothness cost.
Other landscapes are used as meta tasks.
We use the first three landscapes as target tasks. 
For each target task, we uniformly sample 8 meta tasks from the remaining landscapes and use 64 observations per meta task.
See Appendix~\ref{app:benchmark-details} for details.

\begin{figure*}
    \centering
    \includegraphics[width=\linewidth]{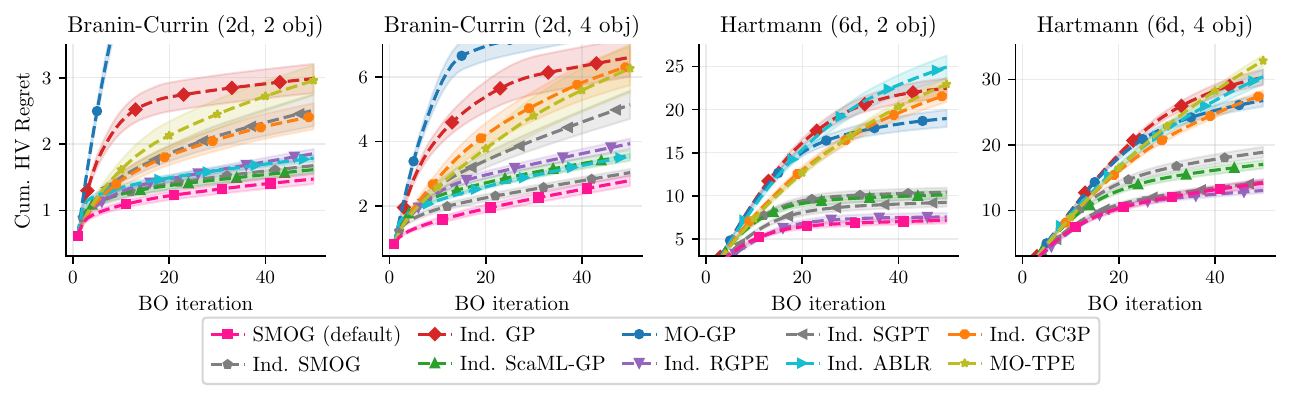}
    \caption{%
        Cumulative \ac{HV} regret (lower is better) on the synthetic benchmarks.
        Columns correspond to benchmark and objective-count combinations:
        BraninCurrin~(2\,obj), BraninCurrin~(4\,obj), \hartmann~(2\,obj), \hartmann~(4\,obj).
        All results use 8 meta-tasks and 16 observations per meta-task.
        The corresponding \ac{HV} gap is shown in \cref{fig:synthetic_hv_appendix} in the Appendix.%
    }
    \label{fig:synthetic_results}
\end{figure*}
\begin{figure}
    \centering
    \includegraphics[width=\linewidth]{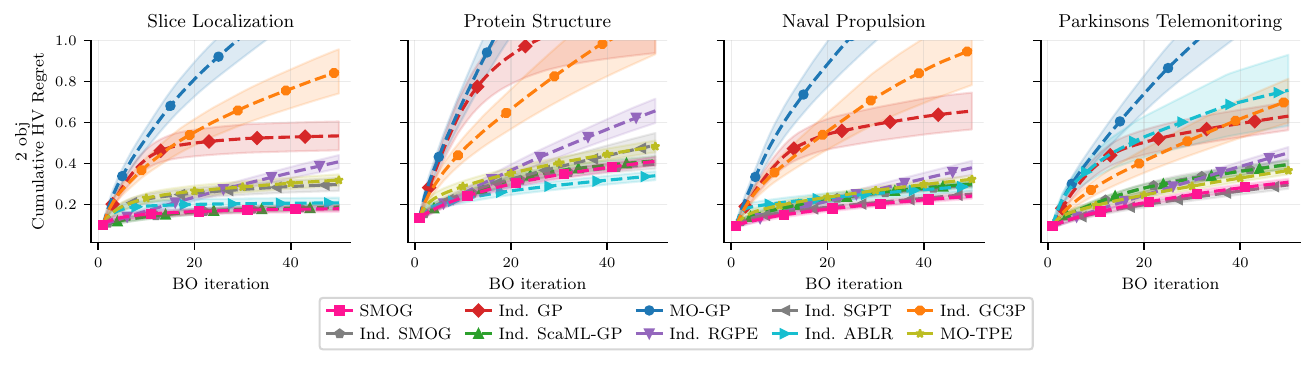}
    \caption{Cumulative \ac{HV} regret (lower is better) on the HPOBench benchmarks.
        Columns correspond to the four target datasets.
        All results use 3 meta-tasks and 16 observations per meta-task.}
    \label{fig:fcnet_results}
\end{figure}

\subsection{Models}\label{subsec:models}

We compare \ourmethod against the following baselines: \MOGP, a multi-task \ac{GP} with Kronecker structure~\cite{bonilla2007multi}; \IndGP, a \ac{GP} model without meta-learning and independent outputs; \IndSCAML, a \priorart model with independent outputs~\cite{tighineanu2024scalable}; \MOTPE, a \ac{TPE}-based model that naturally handles multiple objectives and meta-learning~\cite{watanabe2022speeding}; \IndRGPE and \IndSGPT, meta-learning models using linear combinations of \acp{GP}~\citep{feurer2018scalable,wistuba2018scalable}; \IndABLR, adaptive Bayesian linear regression~\citep{ABLR}; \IndGCThreeP, which uses Gaussian copulas to map observations from different tasks to comparable distributions~\citep{salinas2020quantile}; and \IndSMOG, an ablation variant of \ourmethod that replaces the Kronecker source \acp{GP} with independent per-objective source \acp{GP} while retaining the multi-objective target head. 
Models starting with ``\texttt{Ind.}'' model distinct objectives independently.
We additionally evaluate each model's surrogate quality (\ac{NLPD} and \ac{RMSE}) in isolation, prior to any \ac{BO} iterations, in \cref{app:surrogate-quality}.
Full implementation details are provided in \cref{app:model-details}.

\subsection{Results}

\parahead{Synthetic benchmarks.}
\Cref{fig:synthetic_results} reports cumulative \ac{HV} regret on \branincurrin and \hartmann, each with 2 and 4 objectives; the corresponding \ac{HV} gap is in \cref{fig:synthetic_hv_appendix}.
The non-meta-learning baselines \IndGP and \MOGP are clearly separated from the rest, accumulating the highest regret across all configurations.
Among the metadata-aware methods, \IndSCAML, \IndRGPE, \IndSGPT, and \IndGCThreeP achieve a substantial initial speedup; \IndABLR is competitive on \branincurrin but loses ground on \hartmann. \ourmethod attains the lowest or near-lowest cumulative regret on all four configurations.
The ablation \IndSMOG---which keeps \ourmethod's meta-learning structure but treats objectives independently---closely tracks \ourmethod on most benchmarks, confirming meta-learning as the dominant source of gains; the gap to \IndSMOG widens on \hartmann (4 obj), where cross-objective coupling provides additional benefit.
\IndRGPE matches \ourmethod on \hartmann but is mediocre on \branincurrin.
\texttt{MO-TPE}, the only other method combining meta-learning with multi-objective optimization, beats the non-meta-learning baselines early but is otherwise not competitive.
\Cref{app:exemplary_hartmann_front} presents an analysis of the \hartmann solutions in more detail.

\parahead{HPOBench benchmarks.}
\Cref{fig:fcnet_results} shows the performance of \ourmethod and competitors on the HPOBench benchmark.
For space reasons, we defer the \ac{HV} gap results on the individual datasets to \cref{fig:hpobench_appendix_results} in the Appendix.
As observed on the synthetic benchmarks, \MOGP and \IndGP are not competitive and accumulate substantially more regret than the metadata-leveraging methods on all four datasets.
All other methods leverage metadata and outperform \IndGP and \MOGP by a wide margin.
\ourmethod performs well throughout the benchmark suite. 
On the slice localization and protein structure problems, \ourmethod achieves performance comparable to \IndSMOG and \IndSCAML, which perform worse on the other two problems.
On the protein structure and parkinsons telemonitoring problems, \ourmethod is outperformed by \IndABLR and \IndSGPT, respectively.
However, these methods struggle with other problems (see \cref{fig:synthetic_results}).
The Pareto fronts of the solutions found across all methods and datasets are shown in \cref{fig:pareto-hpobench} in the Appendix.

\parahead{Terrain benchmark.}
Next, we study \ourmethod's performance on the terrain benchmark.
\Cref{fig:terrain_cumregret} shows the cumulative \ac{HV} regret for all three target tasks and both objective configurations (2- and 4-objective).
\Cref{fig:terrain_hv} in the Appendix shows the corresponding \ac{HV} gap.
\begin{figure*}[t]
    \centering
    \includegraphics[width=.95\linewidth]{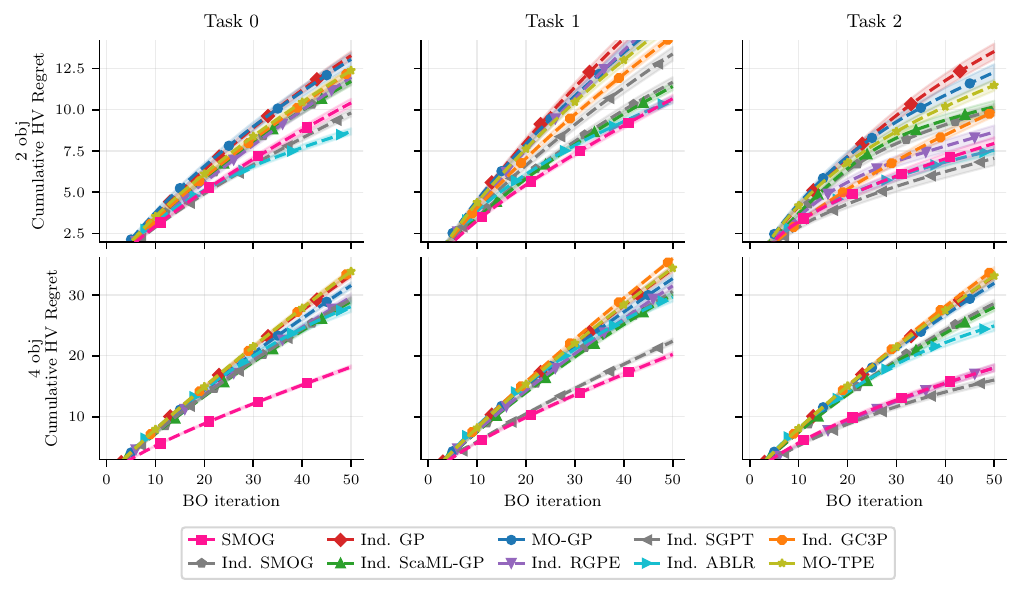}
    \caption{%
        Cumulative \ac{HV} regret (lower is better) on the Terrain benchmark.
        Rows correspond to 2-objective (top) and 4-objective (bottom) variants;
        columns correspond to target tasks 0, 1, and 2.
        All results use 8 meta-tasks and 64 observations per meta-task.%
    }
    \label{fig:terrain_cumregret}
\end{figure*}
The terrain benchmark paints a similar picture to HPOBench: \IndABLR or \IndSGPT occasionally outperform \ourmethod but struggle on other problems, while \ourmethod performs well overall.
Particularly striking is the wide gap between \ourmethod and other methods on the 4-objective variant of task 0.
\ourmethod's advantage increases when transitioning from 2 to 4 objectives, supporting its effectiveness in modeling objective correlations.

\section{Conclusion}

Many impactful applications require optimizing competing objectives:
in aerospace engineering, one seeks a lightweight structure with high strength, while in machine learning, one aims for accurate yet small models.
In many cases, practitioners can use data from related tasks or experiments to quickly find a good solution for a new task.
In this paper, we introduce \ourmethod---a scalable meta-learning algorithm for multi-objective black-box optimization problems.
\ourmethod leverages observations from related tasks and models cross-objective correlations to construct an informative target-task prior, improving sample efficiency in the initial BO iterations.
\ourmethod is principled, with a clear theoretical motivation, and performs robustly when studied in practice.
As such, \ourmethod fills a gap by providing a practical and principled algorithm that combines meta-learning and multi-objective optimization.


\parahead{Limitations.}
\ourmethod's limitations are dictated by its assumptions. \cref{as:independent_meta} achieves scalability but ignores correlations across meta-tasks; \cref{as:meta_target_correlations} restricts the number of \acp{HP} but also limits the number of meta-learning channels; and \cref{as:hp_independence} achieves a practical implementation but blocks the feedback from the target data to meta-task \acp{HP}.
While well-motivated in the regime of scarce and noisy data, more flexible neural-network-based models may be able to meta-learn more complex relationships when ample data is available.
In addition, \ourmethod scales to a large number of tasks, but each task model remains an exact \ac{GP} and scales cubically with the number of data points, so large datasets would require sparse or approximate \ac{GP} extensions.
Finally, \ourmethod assumes that all tasks share the same search and objective spaces.

\parahead{Broader impact.}
This paper presents foundational research on sample-efficient optimization.
On the positive side, more data-efficient optimization reduces the number of expensive experiments needed in engineering and science, thereby saving time, energy, and resources.
On the negative side, we are not aware of direct pathways from this work to harmful applications.

\begin{ack}
Calculations (or parts of them) for this publication were performed on the HPC cluster PALMA II of the University of Münster, subsidized by the DFG (INST 211/667-1).
The authors gratefully acknowledge the computing time granted by the Resource Allocation Board and provided on the supercomputer Emmy/Grete at NHR-Nord@Göttingen as part of the NHR infrastructure. The calculations for this research were conducted with computing resources under the project  nhr\_nw\_test.
The authors gratefully acknowledge the computing time provided to them at the NHR Center NHR4CES at RWTH Aachen University (project number p0026398). This is funded by the Federal Ministry of Education and Research, and the state governments participating on the basis of the resolutions of the GWK for national high performance computing at universities (\url{www.nhr-verein.de/unsere-partner}).
\end{ack}

\FloatBarrier

\bibliography{references}
\bibliographystyle{unsrtnat}

\newpage
\appendix
\acresetall

\section{Kernel properties}
\label{app:kernel_properties}
Here we prove the key properties of \ourmethod's kernel encoded in \cref{lemma:coreg_matrices,lemma:joint_smog_kernel}.

\subsection{Coregionalization matrices}
\label{app:coreg_matrices}
Here we prove \cref{lemma:coreg_matrices}, which we restate below.
\coregmatrices*
\begin{proof}

We begin by showing that the coregionalization matrices in \cref{eq:coreg_matrices_after_assumptions} are uniquely defined by \cref{as:independent_meta,as:meta_target_correlations}. We collect all terms in \cref{eq:multitask_kernel}:
\begin{align}
    \kk{ (\x,\meta,o), (\x',\meta,o') } &= \cov{ f_{\meta o}(\x), f_{\meta o'}(\x') } = \kk[\meta]{(\x,o), (\x',o')} \quad \text{(\cref{as:independent_meta})} \\
    \kk{ (\x,\meta,o), (\x',\meta'\neq\meta,o') } &= 0 \quad \text{(\cref{as:independent_meta})} \\
    \kk{ (\x,t,o), (\x',\meta,o') } &= \cov{f_{to}(\x), f_{mo'}(\x')} \\
        &= \cov{\tilde{f}_{to} + \sum_{\meta'\in\metaset}\tilde{f}_{\meta'o}, f_{\meta o'}} \quad \text{(\cref{as:meta_target_correlations})} \nonumber \\
        &= \sum_{\meta'\in\metaset}w_{\meta'o}\cov{f_{\meta'o} , f_{\meta o'}} \quad \text{(\cref{as:meta_target_correlations})} \nonumber \\
        &= w_{\meta o}\kk[\meta]{ (\x,o), (\x',o') } \quad \text{(\cref{as:independent_meta})},
\end{align}
where we have made use of the fact that the covariance is a bilinear function, and that the perfect correlation $\corr{f_{\meta o}(\x), \tilde{f}_{\meta o}(\x')} = \pm 1$ in \cref{as:meta_target_correlations} implies that $\tilde{f}_{\meta o}=w_{\meta o}f_{\meta o} + c$ with $w_{mo}, c \in \mathbb{R}$. Finally, we have
\begin{equation}
\begin{split}
    &\kk{ (\x,\test,o), (\x',\test,o') } = \cov{f_{\test o}(\x), f_{\test o'}(\x')} \\
    &= \cov{\tilde{f}_{\test o}(\x), \tilde{f}_{\test o'}(\x')} + \sum_{\meta,\meta'\in\metaset}w_{\meta o}w_{\meta'o'} \cov{f_{\meta o}(\x), f_{\meta'o'}(\x')}  \quad \text{(\cref{as:meta_target_correlations})} \\
    &= \kk[\test]{(\x,o), (\x',o')} + \sum_{\meta\in\metaset}w_{\meta o}w_{\meta o'}\kk[m]{(\x,o),(\x',o')} \quad \text{(\cref{as:independent_meta,as:meta_target_correlations})}
\end{split}
\end{equation}
By collecting the coefficients corresponding to $k_m$ and $k_t$, we obtain the coregionalization matrices of \ourmethod in \cref{eq:coreg_matrices_after_assumptions,eq:coreg_matrices_example}. 

These coregionalization matrices are elements of the original \ac{MTGP}, implying that they are \ac{PSD} by definition. To see this, consider the expression for the meta-task block of the \ac{MTGP} kernel in \cref{eq:multitask_kernel} (excluding the target task and all meta-task-to-target-task couplings):
\begin{equation}
k\left[(\bm{x},m,o),(\bm{x}',m',o')\right]= \delta_{m=m'}\sum_{\theta\in\mathcal{O}}\left[h_{m\theta}\right]_{oo'}k_{m\theta}(\bm{x},\bm{x'})
\equiv \delta_{m=m'}k_m\left[(\bm{x},o),(\bm{x'},o')\right],
\end{equation}
where $\left[h_{m\theta}\right]_{oo'} = \left[c_{m\theta}\right]_{mo,mo'}$. This formulation does not make any additional assumptions about the kernel across objectives, $k_m\left[(\bm{x},o),(\bm{x'},o')\right] = \sum_{\theta\in\mathcal{O}}\left[h_{m\theta}\right]_{oo'}k_{m\theta}(\bm{x},\bm{x'})$. The elements $\left[h_{m\theta}\right]_{oo'}$ appear directly in \cref{eq:coreg_matrices_after_assumptions} and are by definition \ac{PSD}. Similar reasoning can be applied to the other elements, implying that all entries of \ourmethod's coregionalization matrices in \cref{eq:coreg_matrices_after_assumptions} are \ac{PSD}.
\end{proof}

\parahead{Example for two meta-tasks and objectives.} We have the following coregionalization matrices in the basis $(m=1,o=1), (m=1,o=2), (m=2,o=1), (m=2,o=2), (t,o=1), (t,o=2)$:
\begin{equation}
\begin{split}
\bm{C}_{1\theta} &= \begin{pmatrix}
\bm{H}_{1\theta} & \zeromatrix{2} & W_1 \odot \bm{H}_{1\theta} \\
\zeromatrix{2}                       & \zeromatrix{2} & \zeromatrix{2} \\
W_1^\top \odot \bm{H}_{1\theta}  & \zeromatrix{2} & W_1^\times \odot \bm{H}_{1\theta}
\end{pmatrix},\qquad \bm{C}_{2\theta} = \begin{pmatrix}
\zeromatrix{2} & \zeromatrix{2}   & \zeromatrix{2} \\
\zeromatrix{2} & \bm{H}_{2\theta} & W_2 \odot \bm{H}_{2\theta} \\
\zeromatrix{2} & W_2^\top \odot \bm{H}_{2\theta} & W_2^\times \odot \bm{H}_{2\theta}
\end{pmatrix},\\
\bm{C}_{t\theta} &= \begin{pmatrix}
\zeromatrix{2}    & \zeromatrix{2}     & \zeromatrix{2} \\
\zeromatrix{2}    & \zeromatrix{2}     & \zeromatrix{2} \\
\zeromatrix{2}    & \zeromatrix{2}     & \bm{H}_{t\theta}
\end{pmatrix},\qquad W_i=\wmatrix{i},\qquad W_i^\times=\wmatrixSQ{i},
\end{split}
\label{eq:coreg_matrices_example}
\end{equation}
where $\odot$ is the elementwise (Hadamard) product. Here, $\bm{H}_{v\theta}$ is matrix notation for $\left[h_{v\theta}\right]_{oo'}$ and describes cross-objective correlations.

\subsection{\ourmethod kernel}
\label{app:smog_joint_kernel}
Next, we prove \cref{lemma:joint_smog_kernel}, which we restate below.
\jointkernel*
\begin{proof}
We start off by using the results of \cref{lemma:coreg_matrices}, \cref{eq:coreg_matrices_example}. The first notational simplification  for the coregionalization matrices can be done by pulling the common $\bm{H}_{v\theta}$ term outside the big matrix. To do this, we introduce a custom matrix product $\boxtimes$ between matrices $\bm{A}$ of size $(\alpha+\beta)\times(\alpha+\beta)$ and $\bm{B}$ of size $\beta \times \beta$. Each $\beta \times \beta$ block of matrix $\bm{A}$ is Hadamard-multiplied with matrix $\bm{B}$:
\begin{equation}
(\bm{A}\boxtimes\bm{B})_{\beta\times\beta} = \bm{A}_{\beta\times\beta} \odot \bm{B}
\end{equation}
Applying this to \cref{eq:coreg_matrices_example} yields coregionalization matrices that are described by a $\boxtimes$-product between a $(M+1)O\times(M+1)O$ matrix containing only weights, $\bm{W}_v$, and a $O\times O$ matrix $\bm{H}_{v\theta}$ describing task-specific correlations across its objectives:
\begin{equation}
\bm{C}_{v\theta} = \bm{W}_v \boxtimes \bm{H}_{v\theta}
\end{equation}
Careful inspection of $\bm{W}_v$ reveals that it can be written as an outer product, $\bm{w}_v\bm{w}_v^T$, where $\bm{w}_v = (\bm{w}_m^T, \bm{w}_t^T)^T$ and
\begin{equation}
\begin{split}
\bm{w}_m &=
\left(
\ldots, \zerovector{O}, \smash{\underbrace{\onevector{O}}_{\text{m-th entry}}}, \zerovector{O}, \ldots, \zerovector{O}, w_{m1}, w_{m2}, \ldots, w_{mO}
\right)^T \\ \\
\bm{w}_t &= 
\left(
\ldots, \zerovector{O}, \onevector{O}
\right)^T,
\end{split}
\end{equation}
implying that $\bm{W}_v$ is \ac{PSD} for all $w_{mo} \in \mathbb{R}$. Inserting this into \cref{eq:multitask_kernel} yields the final expression for our kernel, which we write in matrix form of size $(M+1)O \times (M+1)O$ for convenience:
\begin{equation}
\begin{split}
\bm{K}_\ourmethod(\bm{x},\bm{x'}) &= \sum_{v\in\mathcal{M}\cup\{t\}}\bm{w}_v\bm{w}_v^T \boxtimes \bm{K}_v(\bm{x},\bm{x'}), \\
\bm{K}_v(\bm{x},\bm{x'}) &= \sum_{\theta\in\mathcal{O}}\bm{H}_{v\theta}k_{v\theta}(\bm{x},\bm{x'}),
\end{split}
\label{eq:kernel_joint_final}
\end{equation}
where $\bm{K}_v(\bm{x},\bm{x'})$ is a kernel of size $O\times O$ that acts on the objectives of task $v$. Since $\bm{K}_v(\bm{x},\bm{x'})$ and $\bm{W}_v$ are both \ac{PSD}, the kernel in \cref{eq:kernel_joint_final} is also \ac{PSD} and is therefore a valid kernel. Expressing the kernel in \cref{eq:kernel_joint_final} in index notation concludes the proof of \cref{lemma:joint_smog_kernel}.
\end{proof}

\section{\ourmethod target-task prior}
\label{app:target_prior}
In the following we prove \cref{th:prior_target}, which we restate below.
\targetprior*
\begin{proof}
According to \cref{eq:kernel_joint_final_index}, the joint prior model for the meta-observations $\bm{Y}_{\mathrm{meta}}=\big(\bm{Y}_1^\top,\ldots,\bm{Y}_M^\top\big)^\top$ and the target-task function value at a query point $(\x,o)$ given by
    \begin{equation}
        \begin{bmatrix}
            \bm{Y}_\mathrm{meta} \\
            f_{\test o}(\x)
        \end{bmatrix} \sim \mathcal{N} \left(
        \bm{0}, \begin{bmatrix}
            \bm{K}_\mathrm{meta} & \bm{k}_{\mathrm{meta},o} \\
            \bm{k}_{\mathrm{meta},o}^\mathrm{T} & k_{\test,oo}
        \end{bmatrix}
        \right)
    \end{equation}
    where $\bm{K}_\mathrm{meta}$ is of size $N_\mathrm{meta}\times N_\mathrm{meta}$, $\bm{k}_{\mathrm{meta},o}$ of size $N_\mathrm{meta}\times 1$, and $k_{t,oo}$ a scalar, and are given by
    \begin{align*}
    \bm{K}_{\mathrm{meta}} &= \diag\left( \kk[1]{\X_1, \X_1} + \diag(\sigma_{11}^2,\ldots,\sigma_{1O}^2)\otimes\bm{I}_{N_1}, \dots \right), \\
    \bm{k}_{\mathrm{meta},o} &= \big( w_{1o} \kk[1]{\X_1, (\x,o)}, \dots, w_{\nummeta o} \kk[\nummeta]{\X_\nummeta, (\x,o)} \big), \\
    k_{\test, oo} &= \left(\kk[\test]{(\x, o), (\x,o')} + \sum_{\meta\in\metaset} w_{\meta o}w_{\meta o'} \kk[\meta]{(\x, o), (\x,o')}\right)\delta_{oo'} \equiv k_{\test,oo'} \delta_{oo'},
    \end{align*}
    and $N_{\mathrm{meta}} = O\sum_{m=1}^M N_m$.
    Conditioning on the metadata $\metadata$ yields
    \begin{equation}
     p( f_{\test o} \mid \metadata, \x) = \mathcal{N}\left( m_{t,\ourmethod}(\x, o), \kk[t,\ourmethod]{(\x,o), (\x,o')} \right),
    \end{equation}
    where the target-task prior mean $m_{t,\ourmethod}$ and covariance $k_{t,\ourmethod}$ are given by the standard Gaussian conditioning rules
    \begin{align*}
        m_{t,\ourmethod}(\x, o) &= \bm{k}_{\mathrm{meta},o}^\mathrm{T} \bm{K}_\mathrm{meta}^{-1} \bm{Y}_\mathrm{meta}, \\
        \kk[\test,\ourmethod]{(\x,o),(\x,o')} &= \mathrm{k}_{\test,oo'} - \bm{k}_{\mathrm{meta},o}^\mathrm{T} \bm{K}_\mathrm{meta}^{-1} \bm{k}_{\mathrm{meta},o'}.
    \end{align*}
    Now since $ \bm{K}_\mathrm{meta}$ is block-diagonal in $m$, we have
    \begin{equation*}
    \bm{K}_\mathrm{meta}^{-1} = \diag\left( (\kk[1]{\X_1, \X_1} + \diag(\sigma_{11}^2,\ldots,\sigma_{1O}^2)\otimes\bm{I}_{N_1})^{-1}, \dots \right),
    \end{equation*}
    so that
    \begin{align*}
        m_{t,\ourmethod}(\x,o) 
        &= \sum_\meta w_{\meta o} \kk[\meta]{(\x,o),\X_\meta} (\kk[\meta]{\X_\meta, \X_\meta} + \diag(\sigma_{\meta 1}^2,\ldots,\sigma_{\meta O}^2)\otimes\bm{I}_{N_\meta})^{-1} \bm{Y}_\meta \\
        &= \sum_\meta w_{\meta o}\hat{m}_{\meta o}(\x),
    \end{align*}
    where $\hat{m}_\meta(\x,o)$ is the per meta-task posterior mean after conditioning on the corresponding data $\mathcal{D}_\meta$. Similarly, for the covariance, we have
    \begin{align*}
        &\kk[t,\ourmethod]{(\x,o), (\x',o')}
        = k_{\test,oo'} - \bm{k}_{\mathrm{meta},o}^\mathrm{T} \bm{K}_\mathrm{meta}^{-1} \bm{k}_{\mathrm{meta},o'}, \\
        &=\kk[\test]{(\x,o),(\x',o')} + \sum_\meta w_{\meta o}w_{\meta o'} \kk[\meta]{(\x,o),(\x',o')} -\sum_\meta w_{\meta o} \kk[\meta]{(\x,o), \X_\meta} \\
        &\quad \times (\kk[\meta]{\X_\meta, \X_\meta} + \diag(\sigma_{\meta 1}^2,\ldots,\sigma_{\meta O}^2)\otimes\bm{I}_{N_\meta})^{-1} w_{\meta o'} \kk[\meta]{\X_\meta, (\x',o')}, \\
        &= \kk[\test]{(\x,o),(\x',o')} + \sum_\meta w_{\meta o}w_{\meta o'} \big(  \kk[\meta]{(\x,o),(\x',o')} - \kk[\meta]{(\x,o), \X_\meta} \\
        &\quad \times (\kk[\meta]{\X_\meta, \X_\meta} + \diag(\sigma_{\meta 1}^2,\ldots,\sigma_{\meta O}^2)\otimes\bm{I}_{N_\meta})^{-1} \kk[\meta]{\X_\meta, (\x',o')}\big), \\
        &= \kk[\test]{(\x,o),(\x',o')} + \sum_\meta w_{\meta o}w_{\meta o'} \hat{k}_\meta[(\x,o),(\x',o')],
    \end{align*}
    where $\hat{k}_\meta$ is the corresponding per meta-task posterior covariance. \qedhere
\end{proof}

\section{Implementation details}

\subsection{Experimental setup details}\label{app:experimental-setup}

\parahead{Compute environment.}
All experiments are launched through dedicated \texttt{SLURM} array scripts for the synthetic and real-world benchmarks and executed inside the same Apptainer containerized software environment.
The launch scripts submit jobs to a shared partition, using the array index as the run seed.
By default, jobs run with an 8-hour wall-clock limit; \ourmethod is allocated 8 CPU cores ($\approx$16\,GB RAM) and baseline methods are allocated 4 CPU cores ($\approx$8\,GB RAM).
Experiments with 4 objectives required more memory than the default allocation: all baselines were raised to 8 CPU cores ($\approx$16\,GB RAM); additionally, the wall-clock limit for 4-objective real-world (terrain) jobs was extended to 16 hours to accommodate longer evaluation times.

\parahead{Metrics.}
For each run and at each iteration, we observe the difference of a configuration's hypervolume to the best-observed hypervolume across models, repetitions, and iterations.
We plot the mean difference (averaged across repetitions) and the \ac{SEM}.

\parahead{Reference point.}
We set the \ac{HV} reference point using BoTorch's \texttt{infer\_reference\_point}\footnote{\url{https://botorch.readthedocs.io/en/v0.16.1/_modules/botorch/utils/multi_objective/hypervolume.html}, accessed on 01/26/2026} applied to normalized Pareto-optimal objective values.
The reference point is chosen slightly \emph{worse} than the Pareto nadir point by moving it back by a fixed fraction (we use BoTorch's default $0.1$, proposed by~\citet{ishibuchi2011many}) of the observed objective range in each dimension.
To improve numerical stability, we standardize the objective values to have zero mean and unit variance before computing the reference point and expected hypervolume.

\parahead{Hyperpriors and marginal-likelihood optimization.}
For \ourmethod's equicorrelated meta-task kernel, we place a scaled $\mathrm{Beta}(2,2)$ hyperprior on the correlation parameter $\rho$ over the full admissible interval $\left[-\frac{1}{O-1}, 1\right]$, matching the support induced by the equicorrelation constraint while still favoring interior correlation values.
For every marginal-likelihood optimization in \cref{eq:meta_likelihood_optimization,eq:target_likelihood_optimization}, we use three random restarts, with the first restart initialized at the default hyperparameter configuration.

\subsection{Mixed space acquisition function optimization}\label{app:mixed-space-af}

We employ an interleaving scheme to optimize the acquisition function, similar to that used by~\citet{wan2021think} or \citet{papenmeier2023bounce}.
In particular, we optimize over the discrete variables by selecting the point with the maximum acquisition value from $2^{14}$ candidates uniformly sampled from the set of all possible discrete combinations.
We then fix the discrete variables and optimize over the remaining variables using gradient-based acquisition function maximization with two random restarts and 512 initial samples to select the two starting points.
We repeat this interleaving scheme $5$ times, starting with the discrete optimization and a uniformly random initialization of the continuous variables.

For categorical variables, we use one-hot encoding, representing $n$ categories as $n$ distinct problem dimensions.
We exclude invalid configurations (e.g., two categories being set to ``1'') from the set of candidates of the \ac{AF} maximizer.

\subsection{Benchmark details}\label{app:benchmark-details}

\parahead{Sinusoidal benchmark.} 
As a simple sanity check, we evaluate \ourmethod on a one-dimensional benchmark function
\begin{equation*}
f_{1,1}(x)=\sin(x-\delta)\qquad f_{2,1}(x)=\sin(x)\qquad f_{3,1}(x)=\sin(x+\delta)\qquad f_{m,2}(x)=f_{m,1}(x+\phi)
\end{equation*}
where $\delta=\frac{\pi}{12}$ and $\phi=\frac{\pi}{6}$.
The target task is a weighted sum of the source tasks $f_{t,o}(x)=\bm{f}_o(x)^\intercal\bm{w}_o$, where $\bm{f}_o(x)=(f_{1,o}(x),\ldots,f_{3,o}(x))$, $\bm{w}_1=(0.5, 0.35, 0.15)$, and $\bm{w}_2=(0.4, 0.4, 0.2)$.
We do not benchmark on this problem, but it is instructive to visualize the intermediate posterior of \ourmethod (see~\cref{fig:sinusoidal_results}).

\parahead{The adapted \hartmann benchmark.} The \hartmann problem is defined as\footnote{\url{https://www.sfu.ca/~ssurjano/hart6.html}, accessed on: 05/06/2026}
\[
f(\bm{x})=-\sum_{i=1}^4\alpha_i\exp\left(-\sum_{j=1}^6A_{ij}(x_j-P_{ij})^2\right),
\]
where $\bm{\alpha}$ is a vector of length 4 and $\bm{A}$ and $\bm{P}$ are known $4\times 6$ matrices:
\begin{align}
    \bm{A} &= \begin{pmatrix}
        10 & 3 & 17 & 3.5 & 1.7 & 8 \\
        0.05 & 10 & 17 & 0.1 & 8 & 14\\
        3 & 3.5 & 1.7 & 10 & 17 & 8\\
        17 & 8 & 0.05 & 10 & 0.1 & 14
    \end{pmatrix}\\
    \bm{P}&=\frac{1}{1000}\begin{pmatrix}
        1312 & 1696 & 5569 & 124 & 8283 & 5886 \\
        2329 & 4135 & 8307 & 3736 & 1004 & 9991 \\
        2348 & 1451 & 3522 & 2883 & 3047 & 6650 \\
        4047 & 8828 & 8732 & 5743 & 1091 & 381
    \end{pmatrix}
\end{align}
We adapt this benchmark to allow for meaningful multi-task, multi-objective optimization.
First, we sample a separate coefficient vector, $\bm{\alpha}_m$, for each task.
Second, we define an offset vector $\bm{\varepsilon}_o$ per objective and, for each combination of task and objective $(m,o)\in\mathcal{M}^*\times\mathcal{O}$, aim to minimize
\[
f_{m,o}(\bm{x})=-\sum_{i=1}^4\alpha_{m,i}\exp\left(-\sum_{j=1}^6A_{ij}(x_j-P_{ij}-\varepsilon_{o,j})^2\right).
\]
We sample $\bm{\alpha}_m$ and $\bm{\varepsilon}_o$ as follows:
\begin{align}
    \alpha_{m,1}\sim\mathcal{U}(1.0,1.02)\\
    \alpha_{m,2}\sim\mathcal{U}(1.18,1.2)\\
    \alpha_{m,3}\sim\mathcal{U}(2.0,3.0)\\
    \alpha_{m,4}\sim\mathcal{U}(3.2,3.4)\\
    \varepsilon_{o,j}\sim\mathcal{U}(0,0.15)\\
\end{align}

The distribution for $\varepsilon_{o,j}$ is chosen such that the Gaussian blobs still overlap.
The minimum full width at half maximum of any Gaussian blob is given by $2\sqrt{\frac{\ln 2}{17}}\approx 0.4$, where $17$ is the largest element in $\bm{A}$.
Hence, a maximum offset of $0.15$ is sufficient to ensure correlation and a non-trivial Pareto front between the objectives.

\parahead{The terrain benchmark.}
We rely on the MetaBox implementation by~\citet{ma2025metabox}, which implements the Terrain benchmark defined in~\citet{shehadeh2025benchmarking}.
While MetaBox provides an additional path-clearance cost, we use only the four defined in~\citet{shehadeh2025benchmarking} (path-length cost, obstacle-avoidance cost, altitude cost, and smoothness cost).
Furthermore, we define a variant with two objectives, optimizing both path length cost and obstacle avoidance cost.
We choose target and ``meta'' terrains uniformly at random with a fixed seed.

\parahead{Adapted \branincurrin benchmark}

We define a multi-task, multi-objective variant of the classical \branincurrin test problem on the unit box $\mathcal{X}=[0,1]^2$.
We consider a set of tasks $\nu \in \mathcal{M}^* = \{0,1,\dots,M\}$, where $\nu=0$ denotes the target task and $\nu\in\{1,\dots,M\}$ are meta-tasks, and objectives $o\in\mathcal{O}=[O]$.
For general~$O$ we alternate between Branin-type ($o$ even) and Currin-type ($o$ odd) objectives, recovering the structure of the standard bi-objective \branincurrin benchmark when $O=2$.

To ensure that different objectives attain their optima at different locations (and thus induce non-trivial Pareto fronts), we draw an input-shift vector $\bm{\varepsilon}_o \sim \mathcal{U}([-0.01,0.01]^2)$ for each objective~$o$ and evaluate every objective at the shifted and clamped input:
\begin{equation}
  \tilde{\bm{x}}
  = \Pi_{[0,1]^2}\!\bigl(\bm{x}-\bm{\varepsilon}_o\bigr),
\end{equation}
where $\Pi_{[0,1]^2}$ denotes component-wise clipping to $[0,1]$.

Each task--objective pair $(\nu,o)$ has its own parameter vector $\bm{\theta}_{\nu,o}$.
Rather than sampling all parameters independently, we employ a two-stage scheme that controls inter-task similarity via a \emph{perturbation scale} $\sigma_p\ge 0$ (default $\sigma_p=0.05$).
\begin{enumerate}
  \item \textbf{Target parameters.} For every objective $o$, a target parameterisation $\bm{\theta}_{0,o}$ is drawn from a \emph{tight} range (specified below).
  \item \textbf{Meta-task parameters.} For $\nu\ge 1$ each $\bm{\theta}_{\nu,o}$ is obtained by perturbing $\bm{\theta}_{0,o}$ with controlled noise whose magnitude is governed by $\sigma_p$, and setting $\sigma_p=0$ makes every meta-task identical to the target.
\end{enumerate}
Every pair $(\nu,o)$ uses a deterministic random seed derived from the global seed, so the first $M$ tasks are invariant when the total number of meta-tasks is increased.

\parahead{Branin-type objectives.}
The shifted input $\tilde{\bm{x}}=(\tilde{x}_1,\tilde{x}_2)$ is mapped to the standard Branin domain via $z_1=15\,\tilde{x}_1-5$, $z_2=15\,\tilde{x}_2$:
\begin{equation}
  B_{\nu,o}(\bm{x})
  = \bigl(z_2 - b_{\nu,o}\,z_1^2 + c_{\nu,o}\,z_1
      - r_{\nu,o}\bigr)^{2}
    + s(1-t)\cos(z_1) + s,
  \qquad s=10,\;t=\tfrac{1}{8\pi}.
\end{equation}
The target parameters are sampled from tight ranges:
\begin{equation}\label{eq:branin-tight}
  b_{0,o}\sim\mathcal{U}(0.125,\,0.133),\quad
  c_{0,o}\sim\mathcal{U}(1.55,\,1.63),\quad
  r_{0,o}\sim\mathcal{U}(5.9,\,6.1),
\end{equation}
These ranges are centered on the standard Branin values $b^*\!\approx0.129$, $c^*\!\approx1.59$, $r^*\!=6$.
Meta-task parameters are obtained by additive Gaussian perturbation, clipped to the same tight bounds $[l_k,u_k]$:
\begin{equation}
  \theta_{\nu,o,k}
  = \operatorname{clip}\!\bigl(
      \theta_{0,o,k} + \delta_k,\;l_k,\;u_k
    \bigr),
  \qquad
  \delta_k \sim \mathcal{N}\!\bigl(0,\,
    [\sigma_p(u_k-l_k)]^{2}\bigr),
  \quad k\in\{b,c,r\}.
\end{equation}

\parahead{Currin-type objectives.}
The Currin rational form is~\citep{currin1991bayesian}:
\begin{equation}
  C_{\nu,o}(\bm{x})
  = \Bigl(1-\exp\!\bigl(-\tfrac{1}{2\tilde{x}_2}\bigr)\Bigr)\,
    \frac{p_{\nu,o}(\tilde{x}_1)}{q_{\nu,o}(\tilde{x}_1)},
\end{equation}
where $p_{\nu,o}$ and $q_{\nu,o}$ are order-three polynomials that are fixed in \citet{currin1991bayesian}, but will be slightly varied for the purpose of multi-objective meta-learning.
We set $\tilde{x}_2\leftarrow\max(\tilde{x}_2,10^{-6})$ for numerical stability.
The numerator and denominator polynomials are
\begin{align}
  p_{\nu,o}(u)
    &= a^{(n)}_{\nu,o,1}\,u^3
     + a^{(n)}_{\nu,o,2}\,u^2
     + a^{(n)}_{\nu,o,3}\,u
     + a^{(n)}_{\nu,o,4},\\
  q_{\nu,o}(u)
    &= a^{(d)}_{\nu,o,1}\,u^3
     + a^{(d)}_{\nu,o,2}\,u^2
     + a^{(d)}_{\nu,o,3}\,u
     + a^{(d)}_{\nu,o,4}.
\end{align}
The target coefficients are obtained by mildly perturbing the standard Currin constants with i.i.d.\ uniform multiplicative noise:
\begin{align}
  \bm{a}^{(n)}_{0,o}
    &= (2300,\,1900,\,2092,\,60)\;\circ\;\bm{\eta}^{(n)}_{0,o},
  &\eta^{(n)}_{0,o,i}&\sim\mathcal{U}(0.97,\,1.03),\\
  \bm{a}^{(d)}_{0,o}
    &= (100,\,500,\,4,\,20)\;\circ\;\bm{\eta}^{(d)}_{0,o},
   &\eta^{(d)}_{0,o,j}&\sim\mathcal{U}(0.97,\,1.03).
\end{align}
Meta-task coefficients are obtained by log-normal multiplicative perturbation of the target values:
\begin{equation}
  a^{(\cdot)}_{\nu,o,i}
  = a^{(\cdot)}_{0,o,i}\,
    \exp(\xi_i),
  \qquad
  \xi_i \sim \mathcal{N}\!\bigl(0,\,(0.1\,\sigma_p)^{2}\bigr),
\end{equation}
This perturbation is applied independently to every numerator and denominator coefficient.

\subparagraph{Negation and normalization.}
We negate the raw function values so that the benchmark is framed as maximization:
\begin{equation}
  f_{\nu,o}(\bm{x})
  = -\begin{cases}
      B_{\nu,o}(\bm{x}), & \text{type}(o)=\text{Branin},\\
      C_{\nu,o}(\bm{x}), & \text{type}(o)=\text{Currin}.
    \end{cases}
\end{equation}
By default, each objective is normalized to $[0,1]$.
We estimate the global minimum~$m_o$ and maximum~$M_o$ of $f_{\nu,o}$ across all tasks $\nu\in\{0,\dots,M\}$ via multi-start L-BFGS-B and return
\begin{equation}
  \hat{f}_{\nu,o}(\bm{x})
  = \operatorname{clip}\!\!\left(
      \frac{f_{\nu,o}(\bm{x})-m_o}{M_o-m_o},\;0,\;1
    \right).
\end{equation}

\subsection{Model details}\label{app:model-details}

In this section, we detail the implementations of the different models used in our experiments.
For \ac{GP}-based methods, we use a Mat\'ern-$\frac{5}{2}$ kernel with \ac{ARD} and a Gamma$(1.5, 1)$ lengthscale hyperprior, a Gaussian likelihood with a LogNormal(-4, 1) hyperprior with a lower bound of $10^{-6}$ on the noise term.
For \ourmethod and \IndSCAML, which train meta-\acp{GP} and only model the difference between the intermediate prior and the function output, we use Mat\'ern-$\frac{5}{2}$ kernels with a LogNormal$(0.5, 1.5)$ prior.
The models using multi-task kernels (\ourmethod and \MOGP) learn a global noise term, again with a LogNormal$(-4, 1)$ hyperprior, and can model the function variance as part of their multi-task kernel.
To maintain the same level of flexibility, \IndSCAML and \IndGP include a function variance term with a LogNormal$(-2, 3)$ prior.

Models starting with ``\texttt{Ind.}'' (\IndGP, \IndABLR, \IndRGPE, \IndSCAML, \IndSGPT, \IndGCThreeP, \IndSMOG) are implemented as independent models, i.e., the parameters of the model of output 1 are independent of the parameters of the model of output 2.

\section{Memory and runtime analysis}\label{app:runtime}

\parahead{Runtime.}
We study the empirical runtime of \ourmethod and competitors and show the results in \cref{tab:timing}.
These measurements use the same containerized HPC setup as the main experiments, but with a dedicated controlled allocation for the timing study.
We run each optimizer on shared Intel Cascade Lake nodes with 4 CPU cores and approximately 16\,GB RAM for the 2-objective case, and with 8 CPU cores and approximately 32\,GB RAM for the 4-objective case.
The benchmark is the 6-dimensional Hartmann function with 2 and 4 objectives as described in \cref{app:benchmark-details}.
We initialize each optimizer with 10 observations on the target task and run 5 BO iterations.
For models leveraging metadata, we use 8 meta tasks with 64 observations each.

For two objectives, \ourmethod takes about 3.3 seconds to construct the meta \acp{GP}, 5 seconds to fit the target \ac{GP}, and 1.4 seconds to optimize the \ac{AF}, i.e., the total runtime per \ac{BO} loop is about 6.4 seconds.
For the four objectives, the time to fit the meta \acp{GP} increases to roughly 6.6 seconds, the time to fit the target \ac{GP} to 13.3 seconds, and the time to optimize the \ac{AF} to 3.5 seconds, totaling in about 16.8 seconds per \ac{BO} loop.

Even though on the higher end, \ourmethod's runtime is on the same order of magnitude as the other methods.

\begin{table}[H]
  \centering
  \caption{Wall-clock timing breakdown per optimizer
           (mean $\pm$ std across 50 runs).
           All times in seconds.}
  \label{tab:timing}
  \resizebox{\textwidth}{!}{%
  \begin{tabular}{@{} l c c c c c @{}}
    \toprule
    \multirow{2}{*}{Optimizer}
      & Meta-GP   & Prior        & Target HP        & AF Opt /  & Total /  \\
      & fit          & constr.          & Opt / iter & iter     & iter     \\
    \midrule
    \multicolumn{6}{@{}l}{\textit{2 objectives}} \\[2pt]
    \ourmethod
      & $3.339 \pm 0.276$ & $0.071 \pm 0.004$ & $5.023 \pm 2.560$  & $1.367 \pm 0.198$ & $6.394 \pm 2.539$  \\
    \IndGP
      & ---               & ---               & $0.332 \pm 0.091$  & $0.248 \pm 0.041$ & $0.586 \pm 0.097$  \\
    \IndABLR
      & ---               & $0.240 \pm 0.015$ & $1.110 \pm 0.055$  & $0.165 \pm 0.017$ & $1.279 \pm 0.061$  \\
    \IndGCThreeP
      & ---               & ---               & $0.154 \pm 0.013$  & $0.903 \pm 0.131$ & $1.061 \pm 0.133$  \\
    \MOGP
      & ---               & ---               & $1.358 \pm 0.468$  & $0.554 \pm 0.090$ & $1.917 \pm 0.452$  \\
    \IndRGPE
      & $6.361 \pm 0.550$ & $0.090 \pm 0.017$ & $0.377 \pm 0.057$  & $0.721 \pm 0.101$ & $1.102 \pm 0.112$  \\
    \IndSGPT
      & $6.330 \pm 0.512$ & $0.091 \pm 0.011$ & $0.344 \pm 0.047$  & $0.713 \pm 0.093$ & $1.062 \pm 0.110$  \\
    \IndSCAML
      & $3.397 \pm 0.312$ & $0.077 \pm 0.003$ & $1.548 \pm 0.374$  & $0.771 \pm 0.107$ & $2.323 \pm 0.393$  \\
    \MOTPE
      & ---               & ---               & ---                & $0.040 \pm 0.004$ & $0.047 \pm 0.004$  \\
    \midrule
    \multicolumn{6}{@{}l}{\textit{4 objectives}} \\[2pt]
    \ourmethod
      & $6.579 \pm 0.480$  & $0.128 \pm 0.007$ & $13.308 \pm 7.385$  & $3.506 \pm 0.458$  & $16.823 \pm 7.367$ \\
    \IndGP
      & ---                & ---               & $0.569 \pm 0.219$   & $0.437 \pm 0.090$  & $1.014 \pm 0.256$  \\
    \IndABLR
      & ---                & $0.300 \pm 0.007$ & $3.178 \pm 0.095$   & $0.386 \pm 0.076$  & $3.572 \pm 0.125$  \\
    \IndGCThreeP
      & ---                & ---               & $0.298 \pm 0.019$   & $1.759 \pm 0.269$  & $2.065 \pm 0.271$  \\
    \MOGP
      & ---                & ---               & $5.928 \pm 1.960$   & $0.803 \pm 0.128$  & $6.739 \pm 1.932$  \\
    \IndRGPE
      & $24.155 \pm 1.680$ & $0.133 \pm 0.005$ & $0.965 \pm 0.225$   & $2.846 \pm 0.670$  & $3.820 \pm 0.820$  \\
    \IndSGPT
      & $24.094 \pm 1.701$ & $0.146 \pm 0.017$ & $0.691 \pm 0.058$   & $2.381 \pm 0.332$  & $3.079 \pm 0.349$  \\
    \IndSCAML
      & $7.349 \pm 0.670$  & $0.169 \pm 0.005$ & $8.886 \pm 1.960$   & $3.329 \pm 0.501$  & $12.226 \pm 2.160$ \\
    \MOTPE
      & ---                & ---               & ---                 & $0.086 \pm 0.010$  & $0.093 \pm 0.010$  \\
    \bottomrule
  \end{tabular}%
  }
\end{table}

\parahead{Memory.}
We also measure the memory consumption of \ourmethod on the same \hartmann benchmark.
We initialize each method with $M\in\{1,2,4,8,16,32,64,128\}$ meta tasks, $N_m=64$ observations per meta task, and 10 observations on the target task.
We then run a single \ac{BO} loop and measure memory consumption using \ac{RSS}.
We record both the initial \ac{RSS} (before the start of the \ac{BO} loop) and the peak \ac{RSS} throughout the job's runtime.
The results are shown in \cref{tab:memory}.

We observe that memory consumption scales sublinearly with $M$: doubling $M$ never leads to a doubling of the memory consumption.
Rather, memory consumption remains almost constant for $M\in\{1,2,4\}$ and only scales moderately for higher $M$.
For four objectives at $M{=}128$, the reported peak \ac{RSS} \emph{decreases} relative to $M{=}64$ while the standard deviation increases by an order of magnitude ($\pm 6116$\,MB vs.\ $\pm 1027$\,MB).  
This is consistent with the working set exceeding available physical memory: \texttt{ru\_maxrss} measures peak \emph{resident} pages, so once the OS begins swapping, the metric is capped by physical RAM rather than reflecting true memory demand.  The monotonic growth observed in the single-objective setting, where the footprint remains within physical memory for all values of~$M$, corroborates this interpretation.
This suggests that scaling to $M{=}128$ meta-tasks with four objectives requires either more physical memory or a more memory-efficient surrogate representation.

Overall, memory consumption is of no particular concern for \ourmethod.

\begin{table}[H]
  \centering
  \caption{Memory consumption as a function of the number of meta-tasks $M$ (50 runs per configuration).
           Init RSS is the process's peak RSS after optimizer initialization.
           Job Peak RSS is the true peak over the entire job
           (via \texttt{resource.getrusage}).
           Scaling factors are relative to the job peak RSS.}
  \label{tab:memory}
  \begin{tabular}{@{} r c c c c @{}}
    \toprule
    \multirow{2}{*}{\# Meta Tasks}
      & Init RSS          & Job Peak RSS       & Increase        & Increase vs          \\
      &  (MB)               & (MB)             & vs prev         & smallest $M$         \\
    \midrule
    \multicolumn{5}{@{}l}{\textit{2 objectives}} \\[2pt]
      1   & $1536.908 \pm 1.260$ & $1704.128 \pm  36.733$ & ---          & $1.00\times$ \\
      2   & $1537.602 \pm 1.212$ & $1755.766 \pm  15.233$ & $1.03\times$ & $1.03\times$ \\
      4   & $1537.153 \pm 1.229$ & $1796.080 \pm  22.439$ & $1.02\times$ & $1.05\times$ \\
      8   & $1537.807 \pm 1.055$ & $1879.171 \pm  40.500$ & $1.05\times$ & $1.10\times$ \\
     16   & $1538.878 \pm 1.068$ & $2145.700 \pm  43.452$ & $1.14\times$ & $1.26\times$ \\
     32   & $1539.919 \pm 1.087$ & $2751.825 \pm  17.388$ & $1.28\times$ & $1.61\times$ \\
     64   & $1543.418 \pm 1.149$ & $3903.600 \pm  43.822$ & $1.42\times$ & $2.29\times$ \\
    128   & $1550.982 \pm 1.107$ & $6187.910 \pm 112.023$ & $1.59\times$ & $3.63\times$ \\
    \midrule
    \multicolumn{5}{@{}l}{\textit{4 objectives}} \\[2pt]
      1   & $1537.091 \pm 1.109$ & $1921.185 \pm   98.827$ & ---            & $1.00\times$ \\
      2   & $1537.175 \pm 1.053$ & $1985.962 \pm  103.897$ & $1.03\times$   & $1.03\times$ \\
      4   & $1537.642 \pm 1.247$ & $2101.813 \pm   94.758$ & $1.06\times$   & $1.09\times$ \\
      8   & $1538.116 \pm 1.023$ & $2517.527 \pm   65.971$ & $1.20\times$   & $1.31\times$ \\
     16   & $1539.082 \pm 1.229$ & $3410.993 \pm   85.776$ & $1.35\times$   & $1.78\times$ \\
     32   & $1540.642 \pm 1.276$ & $5176.973 \pm  104.131$ & $1.52\times$   & $2.69\times$ \\
     64   & $1544.187 \pm 1.095$ & $8607.378 \pm 1027.369$ & $1.66\times$   & $4.48\times$ \\
    128 & $1552.841 \pm 1.140$ & $5644.574 \pm 6115.872$ & $0.66\times$ & $2.94\times$ \\
    \bottomrule
  \end{tabular}%
\end{table}

\section{Ablation studies}\label{app:ablations}

We conduct ablation studies to study the sensitivity of \ourmethod and the other optimizers to different benchmark properties.
We set the default to 2 objectives, 8 meta tasks, and 64 observations per task, and vary one of these parameters per experiment.

In \cref{fig:ablation_n_meta_tasks}, we vary the number of meta tasks and study how this affects the behavior of the different optimizers.
Generally, increasing the number of meta tasks improves performance, particularly in early iterations.
The only exception to this is \IndSGPT, which is only minimally sensitive to the number of meta tasks.
Among the different optimizers, \ourmethod shows the best performance across all numbers of meta tasks.

\begin{figure}[H]
    \centering
    \includegraphics[width=\linewidth]{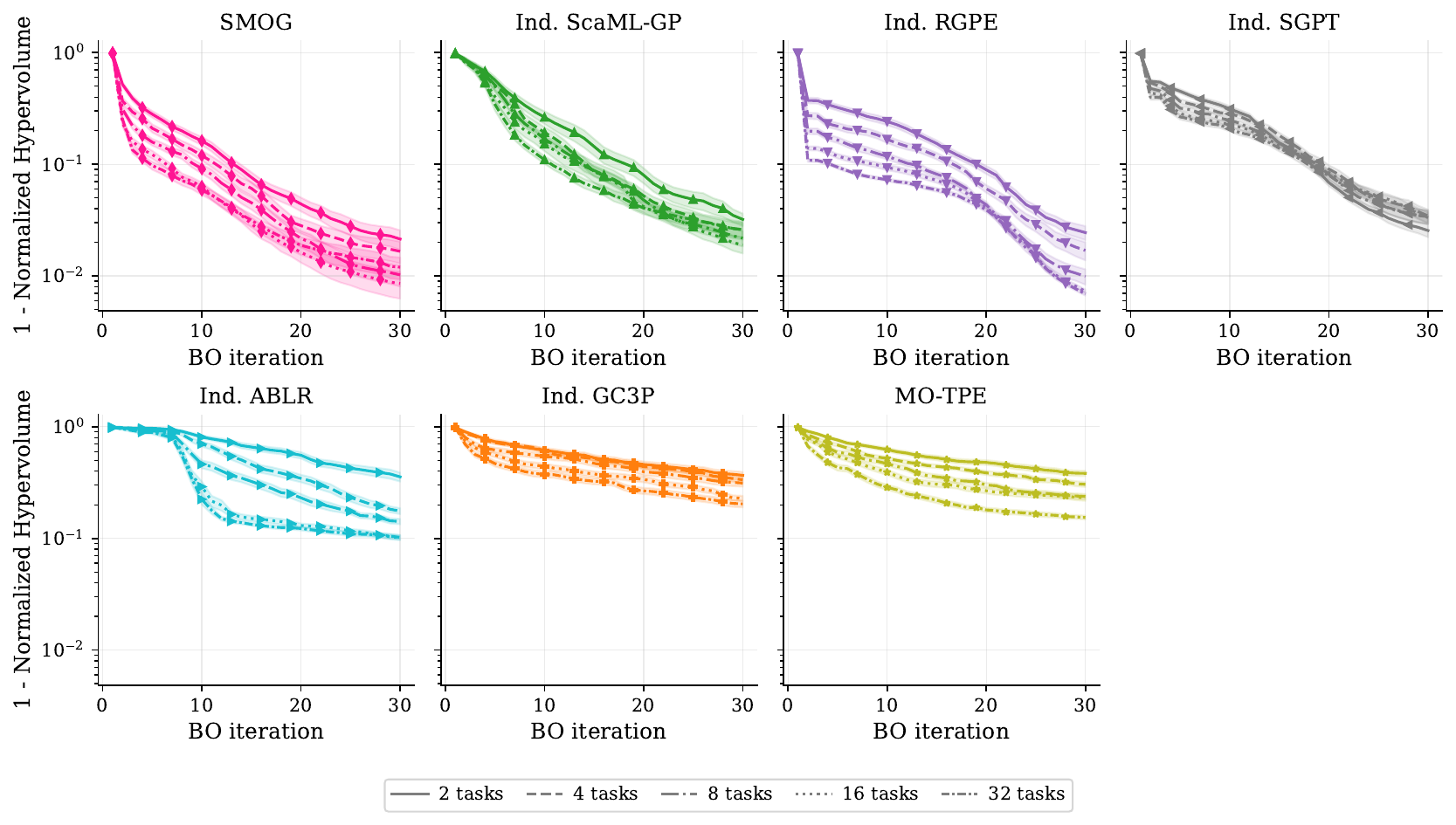}
    \caption{Ablation over the number of meta tasks.}
    \label{fig:ablation_n_meta_tasks}
\end{figure}

Next, we study how varying the number of objectives affects optimization performance (see \cref{fig:ablation_n_objectives}).
The relative ordering of methods is largely preserved across 2, 3, and 4 objectives, with \IndGCThreeP and \MOTPE remaining the weakest in all settings.
\ourmethod is the strongest method on the 2- and 3-objective variants; on the 4-objective variant, \IndRGPE closes the gap and performs on par with \ourmethod, while both clearly outperform the remaining baselines.

\begin{figure}[H]
    \centering
    \includegraphics[width=\linewidth]{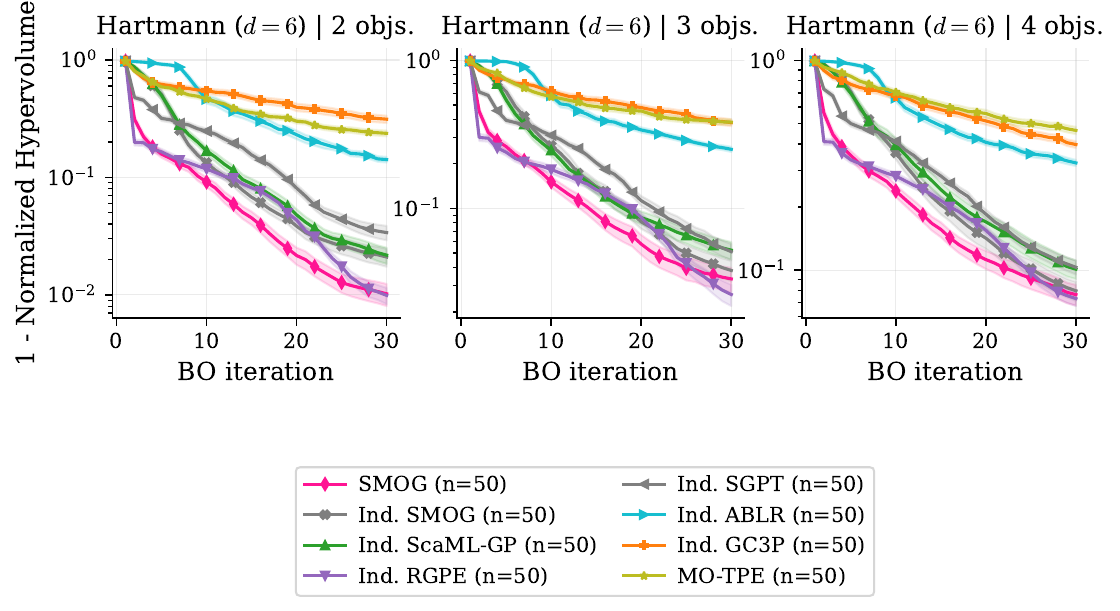}
    \caption{Ablation over the number of objectives.}
    \label{fig:ablation_n_objectives}
\end{figure}

Finally, we study how varying the number of observations per meta task affects the optimizer's performance in \cref{fig:ablation_n_obs}.
As expected, a higher number of observations per task improves optimization performance for all optimizers.
For most methods, the gain from 16 to 32 observations is larger than from 32 to 64, indicating diminishing returns as the metadata grows.
We observe that \ourmethod performs best across all observation counts.

\begin{figure}[H]
    \centering
    \includegraphics[width=\linewidth]{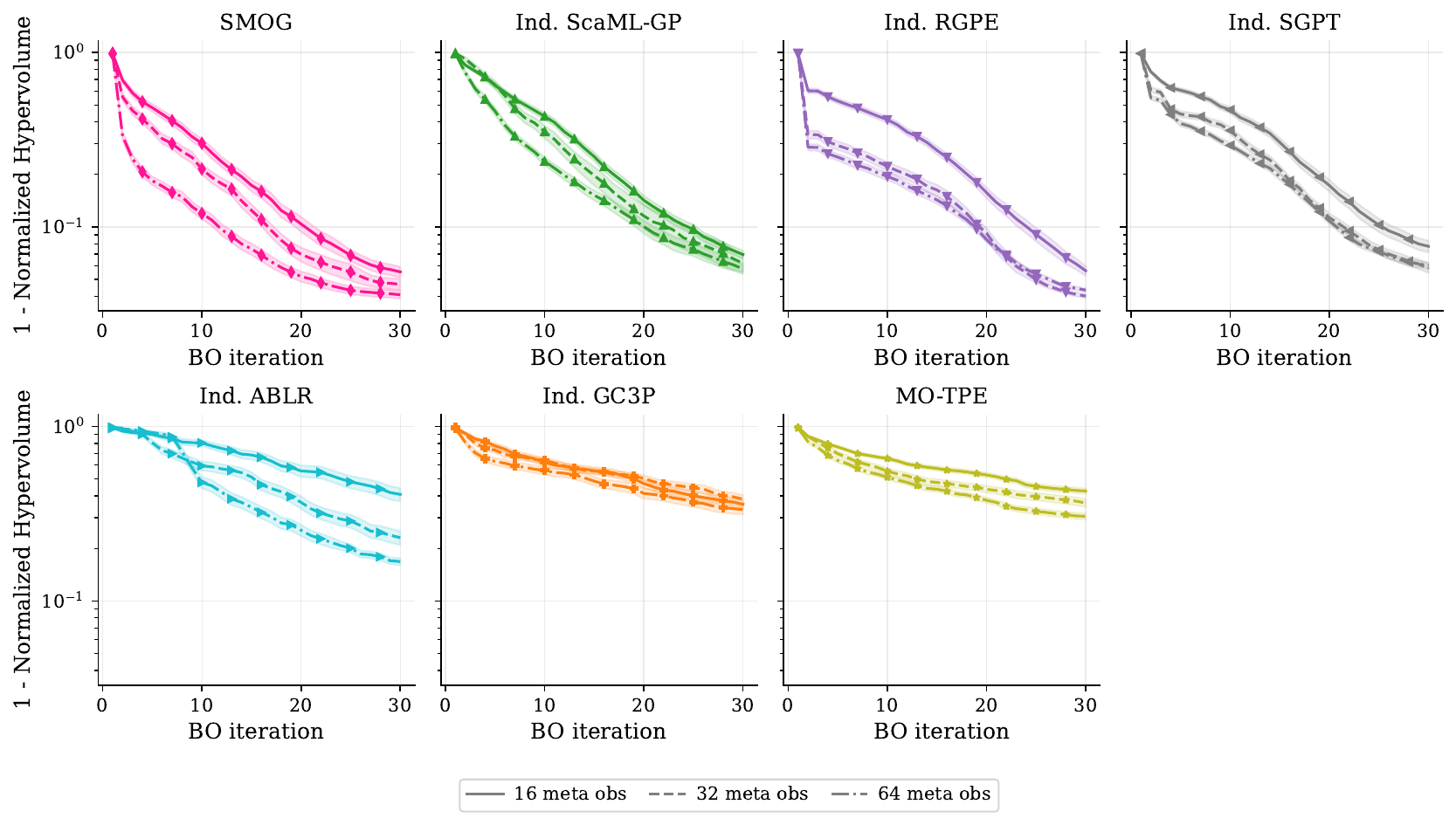}
    \caption{Ablation over the number of observations per meta task.}
    \label{fig:ablation_n_obs}
\end{figure}

\subsection{Acquisition function comparison}\label{app:acqfunc-comparison}
All GP-based optimizers in our benchmark use \texttt{LogEHVI} as the default acquisition function.
A natural question is whether the choice of acquisition function decisively influences performance, or whether the meta-learning components are the main driver.
To investigate this, we re-run all GP-based optimizers on the \branincurrin and Hartmann\,6 benchmarks (8 meta tasks, 64 observations per task, 2 objectives, 50 seeds) with two acquisition functions: \texttt{LogEHVI}~\citep{ament2023unexpected} (solid lines) and \texttt{qLogParEGO}~\citep{ament2023unexpected} (dashed lines).
\MOTPE is excluded from this comparison as it does not employ a GP-based acquisition function.
\Cref{fig:acqfunc_comparison_branincurrin,fig:acqfunc_comparison_hartmann6} show the results.

Across both benchmarks and nearly all surrogates, \texttt{LogEHVI} reaches a lower final \ac{HV} gap than \texttt{qLogParEGO}.
This is consistent with the fact that \texttt{LogEHVI} directly optimizes a hypervolume-based criterion, whereas \texttt{qLogParEGO} relies on random scalarizations of the objectives.
The magnitude of this gap depends on both the surrogate and the benchmark.
On \branincurrin, the two acquisition functions yield largely overlapping curves for several methods (e.g.\ \texttt{Ind.-ScaML-GP}, \texttt{Ind.-RGPE}, \texttt{Ind.-GC3P}), while more pronounced differences appear for \texttt{Ind.-ABLR}.
On Hartmann\,6, \texttt{LogEHVI} provides a clearer advantage for most methods---including \ourmethod, \texttt{Ind.-ScaML-GP}, and \texttt{Ind.-RGPE}---suggesting that hypervolume-aware acquisition is particularly beneficial in higher-dimensional settings.

Importantly, the relative ranking of meta-learning methods is largely preserved under both acquisition functions, and the gap between \texttt{LogEHVI} and \texttt{qLogParEGO} on a given surrogate is typically smaller than the gap between different surrogates.
\ourmethod remains among the top-performing methods on both benchmarks regardless of the acquisition function used, confirming that its benefit stems from the structured task-covariance prior rather than from the choice of acquisition function, and supporting the broader conclusion that the meta-learning component---rather than the scalarization strategy---is the primary source of performance differences across optimizers.

\begin{figure}[H]
    \centering
    \includegraphics[width=\linewidth]{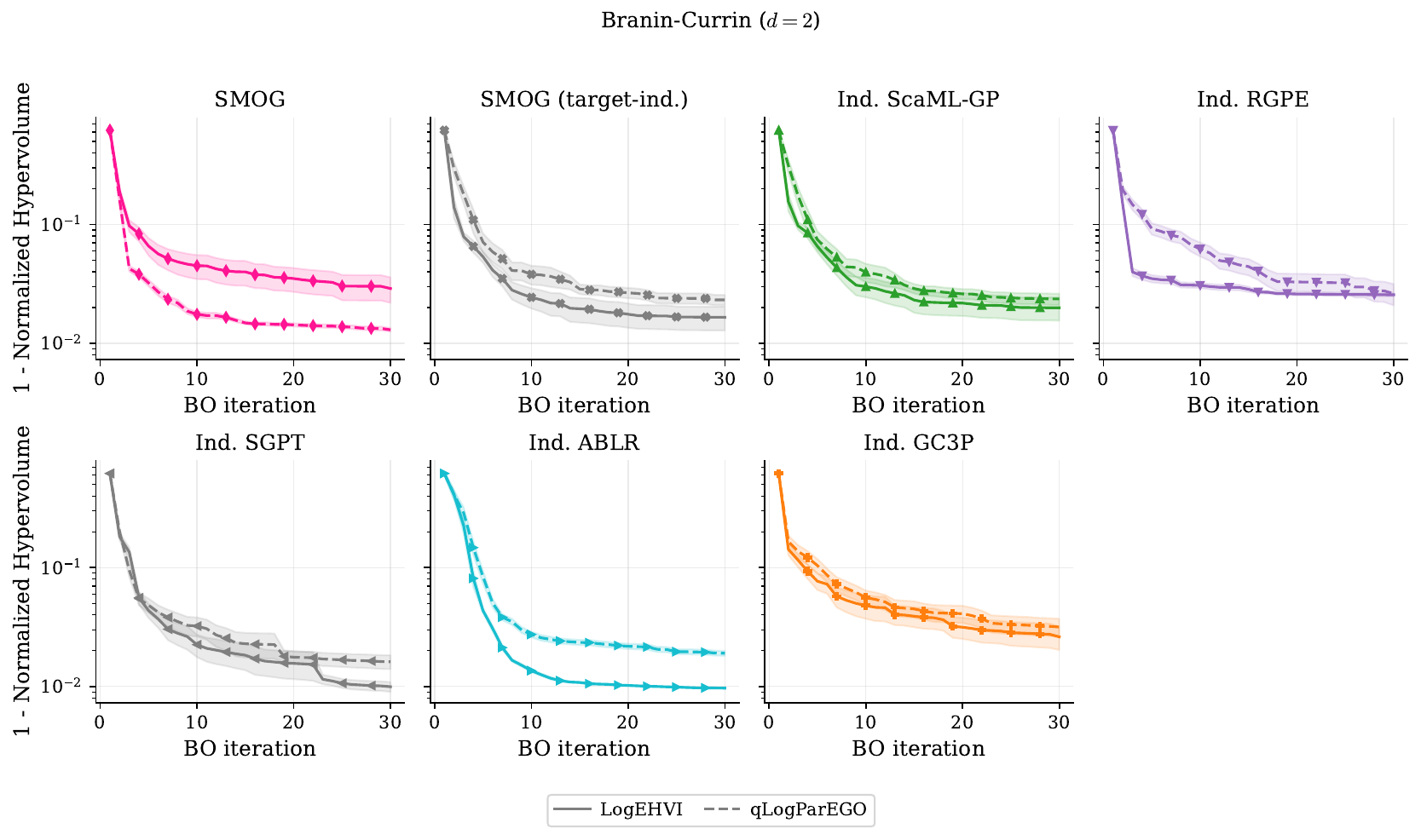}
    \caption{%
        Acquisition function comparison on \branincurrin ($d=2$, 2 objectives).
        Solid lines: \texttt{LogEHVI}; dashed lines: \texttt{qLogParEGO}.
        Each curve shows the mean $\pm$ \ac{SEM} over 50 seeds.
        Lower is better.}
    \label{fig:acqfunc_comparison_branincurrin}
\end{figure}

\begin{figure}[H]
    \centering
    \includegraphics[width=\linewidth]{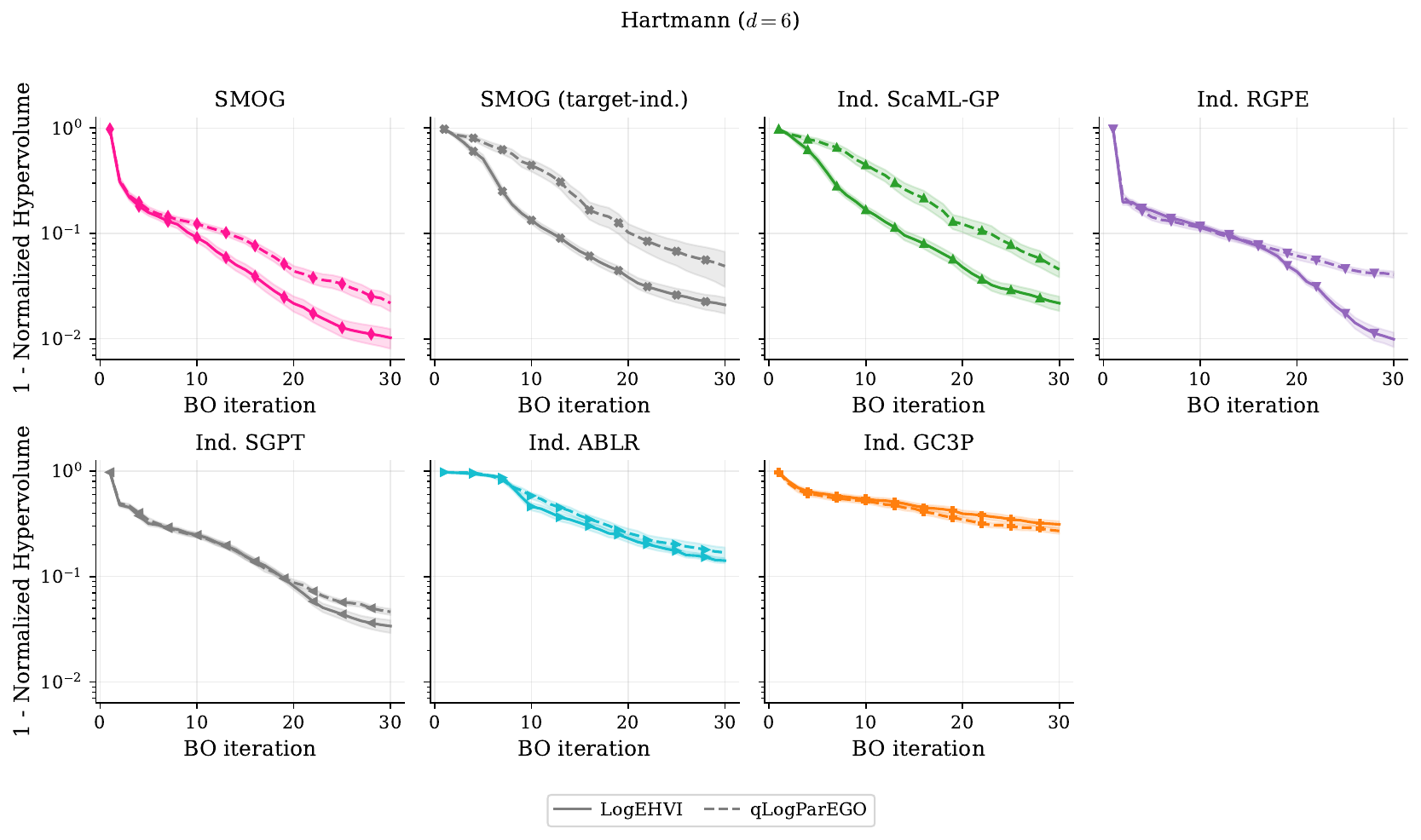}
    \caption{%
        Acquisition function comparison on Hartmann\,6 ($d=6$, 2 objectives).
        Solid lines: \texttt{LogEHVI}; dashed lines: \texttt{qLogParEGO}.
        Each curve shows the mean $\pm$ \ac{SEM} over 50 seeds.
        Lower is better.}
    \label{fig:acqfunc_comparison_hartmann6}
\end{figure}

\section{Additional plots}
\subsection{Synthetic benchmarks}

\Cref{fig:synthetic_results} in the main text shows the cumulative \ac{HV} regret on the synthetic benchmarks.
\Cref{fig:synthetic_hv_appendix} below shows the corresponding \ac{HV} gap.

\begin{figure*}[h]
    \centering
    \includegraphics[width=\linewidth]{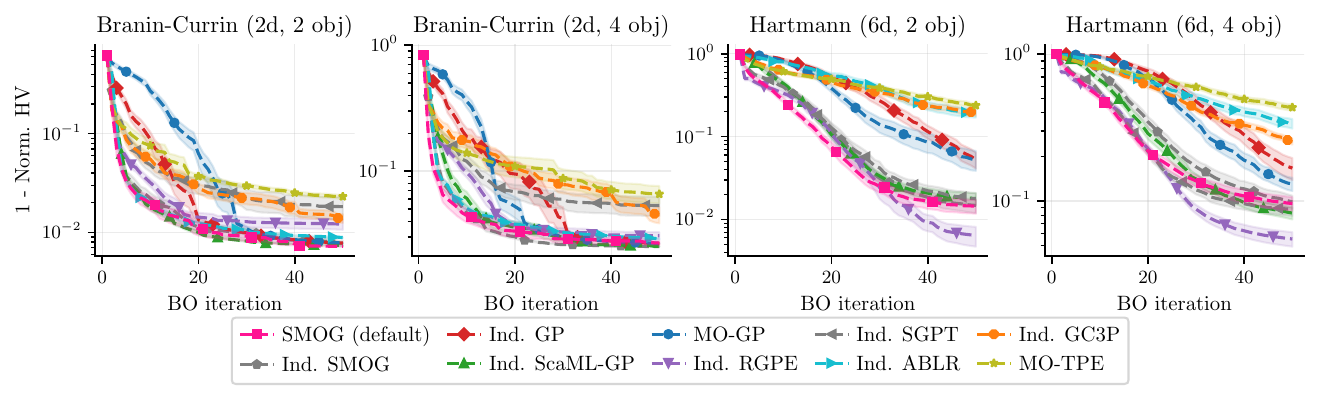}
    \caption{%
        \ac{HV} gap ($1 -$ normalized \ac{HV}; lower is better) on the synthetic benchmarks.
        Columns correspond to benchmark and objective-count combinations:
        BraninCurrin~(2\,obj), BraninCurrin~(4\,obj), \hartmann~(2\,obj), \hartmann~(4\,obj).
        The $y$-axis is on a logarithmic scale.
        All results use 8 meta-tasks and 16 observations per meta-task.%
    }
    \label{fig:synthetic_hv_appendix}
\end{figure*}

\subsection{HPOBench benchmarks}

In this section, we present the \ac{HV} gap results on the individual datasets of the HPOBench benchmark, complementing the cumulative \ac{HV} regret shown in \cref{fig:fcnet_results} in the main text.

\begin{figure}[H]
    \centering
    \includegraphics[width=\linewidth]{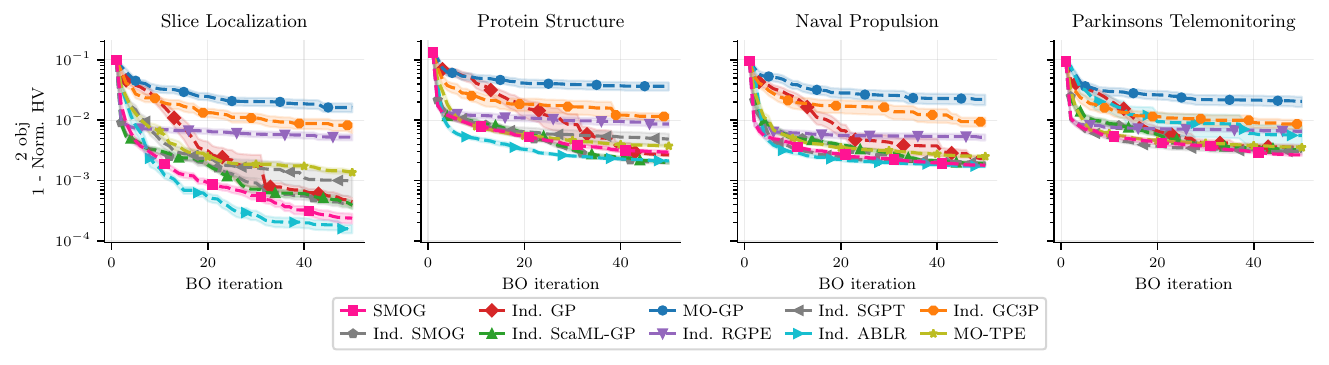}
    \caption{%
        \ac{HV} gap (lower is better) on the individual HPOBench datasets.
        Columns correspond to the four target datasets.
        All results use 3 meta-tasks and 16 observations per meta-task.%
    }
    \label{fig:hpobench_appendix_results}
\end{figure}

\subsection{Terrain benchmarks}

\Cref{fig:terrain_hv} shows the \ac{HV} gap on the Terrain benchmark for all three target tasks and both objective configurations, complementing the cumulative \ac{HV} regret shown in \cref{fig:terrain_cumregret} in the main text.

\begin{figure}[H]
    \centering
    \includegraphics[width=\linewidth]{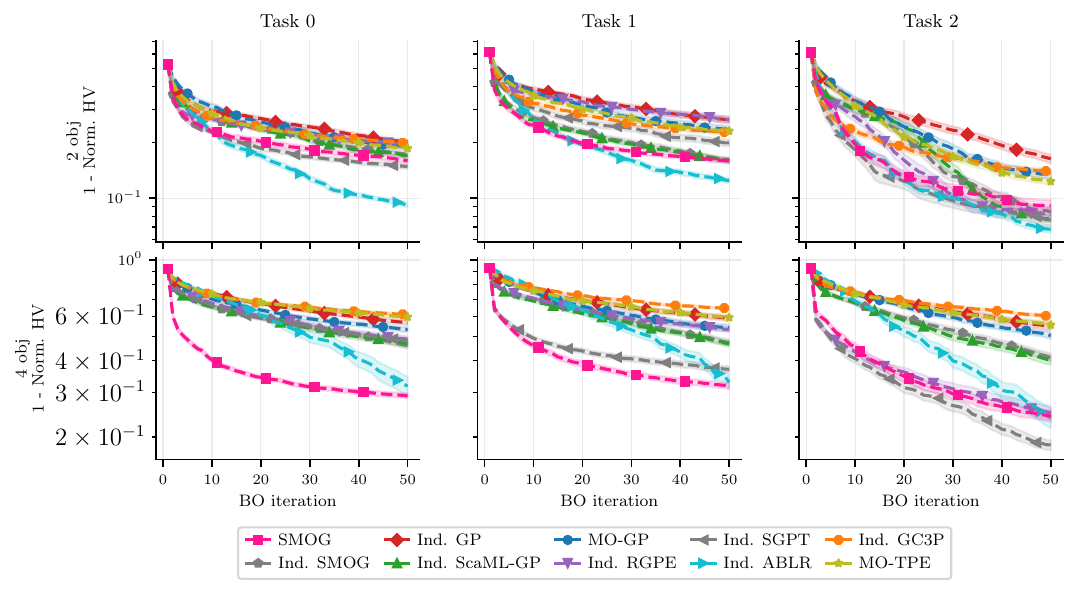}
    \caption{%
        \ac{HV} gap (lower is better) on the Terrain benchmark.
        Rows correspond to 2-objective (top) and 4-objective (bottom) variants;
        columns correspond to target tasks 0, 1, and 2.%
    }
    \label{fig:terrain_hv}
\end{figure}

\subsection{Exemplary \hartmann Pareto front}\label{app:exemplary_hartmann_front}

In this section, we highlight how \ourmethod learns well-behaved Pareto fronts on the two-objective \hartmann benchmark.
For each optimizer, we study the first 15 observations of the run of median \ac{HV} at iteration 15.
Based on these 15 observations, we plot the Pareto front and show the results in \cref{fig:hartmann6_pareto}.
Additionally, we estimate a ``true'' Pareto front by observing $10^6$ points sampled uniformly at random from the search space $\mathcal{X}$.
Even though \IndSGPT and \IndRGPE can find well-performing solutions, \ourmethod finds more Pareto-optimal solutions, with a larger spread.
Furthermore, the solutions found by \ourmethod are better than the ``true'' Pareto front with only 15 observations on the target task.
The metadata-free optimizers \MOGP and \IndGP perform considerably but still outperform \MOTPE by a wide margin.
Note that some methods may not appear in \cref{fig:hartmann6_pareto}: the plot is clipped to $[2.5, \infty)$ on both axes, so methods whose Pareto-front solutions all fall outside this range are not shown.

\begin{figure}[H]
    \centering
    \includegraphics[width=.5\linewidth]{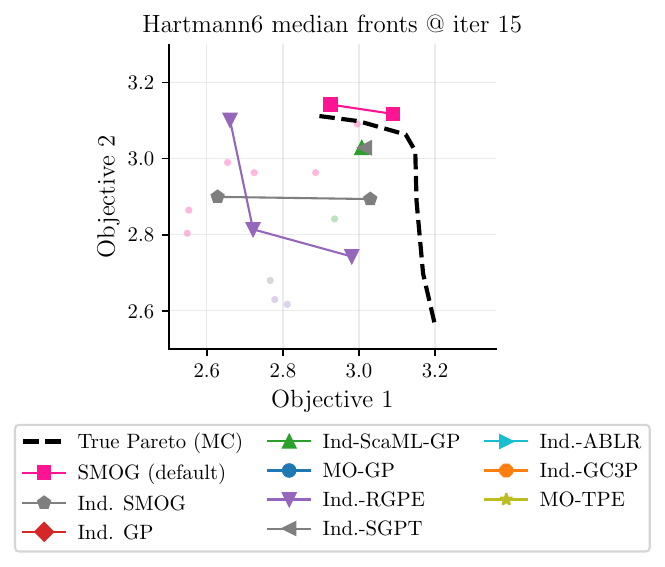}
    \caption{Pareto fronts of the first 15 observations of \ourmethod and competitors on the 6-dimensional two-objective \hartmann benchmark for the run of median \ac{HV} at iteration 15. The ``true'' Pareto front is approximated from $10^6$ uniformly sampled points. Both axes are clipped to $[2.5, \infty)$; methods with no solutions in this range are not shown. \ourmethod learns a Pareto front close to the true Pareto front within 15 iterations.}
    \label{fig:hartmann6_pareto}
\end{figure}

\subsection{HPOBench Pareto fronts}\label{app:pareto-hpobench}

To assess the quality of solutions produced by the different optimizers, we plot the Pareto fronts based on observations across 50 repetitions (see \cref{fig:pareto-hpobench}).
Since HPOBench is a tabular benchmark, one can enumerate all possible solutions and compute the true Pareto front.
However, since the HPOBench dataset provides four observations per configuration and we observe only once, we plot an estimated true Pareto front, which explains why some observations are better than the ``true'' Pareto front.

Importantly, the solutions found by the optimizers are close to the ``true'' Pareto front, showing that the solutions actually are of good quality---a fact that is not evident from only studying the normalized \ac{HV}.

Furthermore, methods that do not leverage metadata (\IndGP and \MOGP) tend to perform worse, confirming the intuition that metadata access improves solution quality.

\begin{figure}[H]
     \centering
     \begin{subfigure}[b]{0.48\textwidth}
         \centering
         \includegraphics[width=\textwidth]{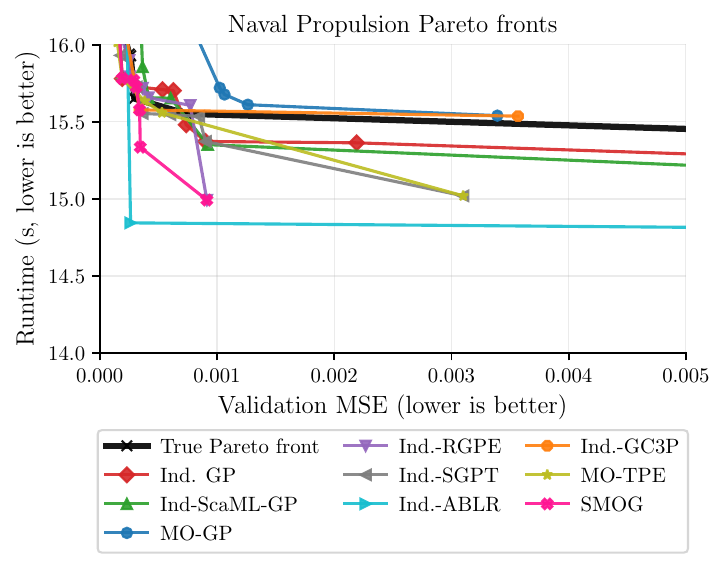}
     \end{subfigure}
     \hfill
     \begin{subfigure}[b]{0.48\textwidth}
         \centering
         \includegraphics[width=\textwidth]{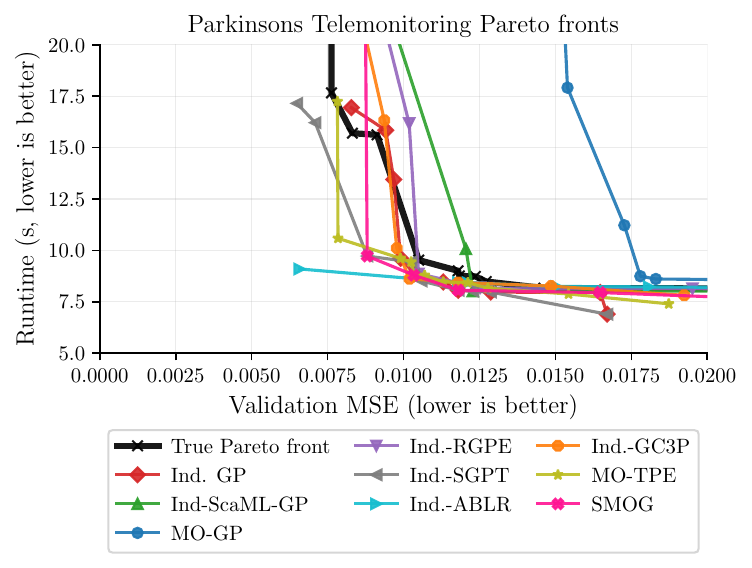}
     \end{subfigure}
     \begin{subfigure}[b]{0.48\textwidth}
         \centering
         \includegraphics[width=\textwidth]{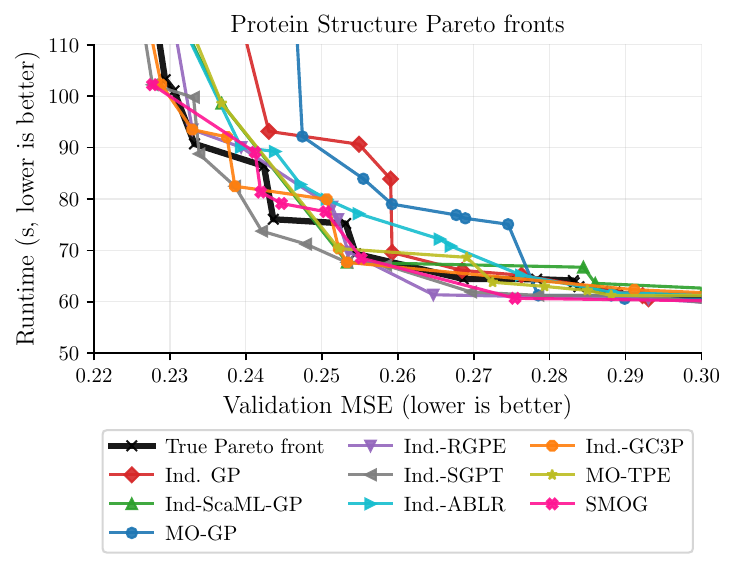}
     \end{subfigure}
     \hfill
     \begin{subfigure}[b]{0.48\textwidth}
         \centering
         \includegraphics[width=\textwidth]{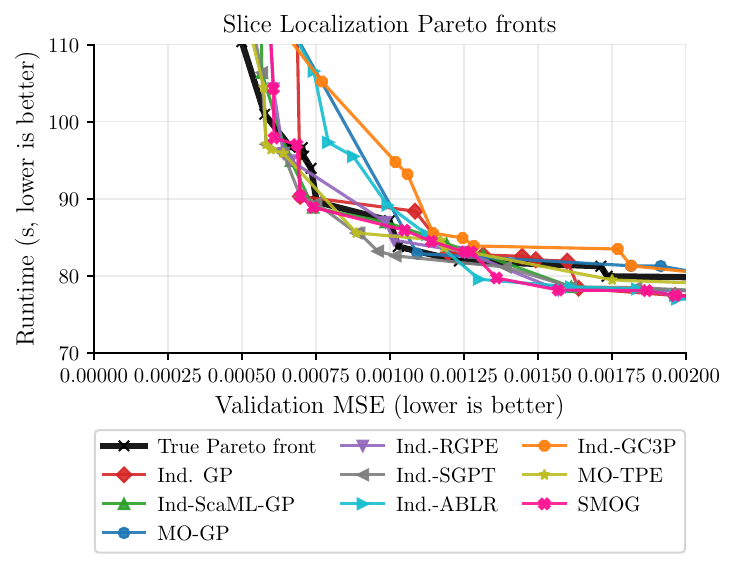}
     \end{subfigure}
    \caption{Pareto fronts of the HPOBench benchmark, pooled across the 50 repetitions. The black line shows an estimated true Pareto front.}
    \label{fig:pareto-hpobench}
\end{figure}

\subsection{Surrogate quality}\label{app:surrogate-quality}

To complement the optimization-performance results in the main text, we directly evaluate the \emph{surrogate quality} of each model:
After fitting on $n_{\mathrm{init}}=10$ target-task observations drawn from a Sobol sequence, we measure the
\acf{NLPD} and \acf{RMSE} against $500$ held-out noiseless Sobol points.
The experiment, therefore, isolates prior and meta-learning quality from the effect of the acquisition function.
Meta-data consists of $M=8$ source tasks with $64$ observations each; we run $50$ independent seeds per setting.
\MOTPE is excluded because it does not maintain a \ac{GP} surrogate.

We report results across three benchmarks (\branincurrin, Hartmann-6, Terrain) and both 2- and 4-objective configurations.
Lower NLPD and RMSE indicate a better-calibrated, more accurate surrogate.
\Cref{fig:surrogate_quality_branincurrin,fig:surrogate_quality_hartmann6,fig:surrogate_quality_terrain}
show the mean $\pm$ standard error of the mean over seeds.

\parahead{RMSE.}
\ourmethod reduces RMSE relative to the non-meta-learning baselines \IndGP and \MOGP most clearly on \branincurrin, where it achieves roughly half their error (e.g.\ $0.073$ vs.\ $0.151$/$0.156$ with 2~objectives).
On Hartmann-6, the gains are modest: with 2~objectives, \ourmethod is slightly better than \IndGP and \MOGP, but with 4~objectives, \ourmethod ($0.432$) is marginally \emph{worse} than \IndGP ($0.406$).
On Terrain, RMSE differences among the leading methods are negligible, and no method clearly dominates.
Across all settings, \IndSCAML tracks \ourmethod very closely in terms of RMSE, indicating that standard multi-output meta-learning already captures most of the accuracy benefit.
Notably, \IndABLR achieves the lowest RMSE on \branincurrin ($0.054$) despite its better-known advantage in calibration, while \IndGCThreeP produces substantially worse \ac{RMSE} on Terrain (exceeding twice the error of the leading methods).

\parahead{NLPD.}
Calibration results are more benchmark-dependent.
On \branincurrin (both 2- and 4-objective), \ourmethod achieves better NLPD than \IndGP and \MOGP but is clearly outperformed by the parametric transfer methods \IndABLR and \IndSGPT, which achieve the best calibration in these settings.
On Hartmann-6 with 2~objectives, \ourmethod ($13.5$) is \emph{worse} than the non-meta-learning baselines \IndGP ($11.4$) and \MOGP ($11.5$), with \IndSGPT ($1.75$) and \IndABLR ($3.12$) far ahead; with 4~objectives the picture improves for \ourmethod ($5.26$), which outperforms \IndGP ($7.79$) and comes close to \IndSGPT ($4.62$) and \IndABLR ($3.54$).
On Terrain, \ourmethod achieves among the best NLPD across both objective counts ($10.3$ and $8.96$, respectively), closely matching \IndSGPT and outperforming \IndABLR and \IndGP.
In striking contrast, \IndGCThreeP produces catastrophically large \ac{NLPD} on Terrain (on the order of $10^9$ and $10^{18}$, respectively), indicating severe miscalibration that is specific to this benchmark.

Taken together, \ourmethod is a reliable choice for predictive accuracy (RMSE) on benchmarks where meta-learning provides a clear signal, but its calibration (NLPD) advantage over non-meta-learning baselines is inconsistent: parametric transfer methods \IndABLR and \IndSGPT achieve better calibration when the target function is smooth and well-matched to the prior (\branincurrin), while \ourmethod's nonparametric approach proves more robust on the structured Terrain benchmark where parametric priors appear to collapse.

\begin{figure}[H]
    \centering
    \begin{subfigure}[b]{0.49\textwidth}
        \centering
        \includegraphics[width=\textwidth]{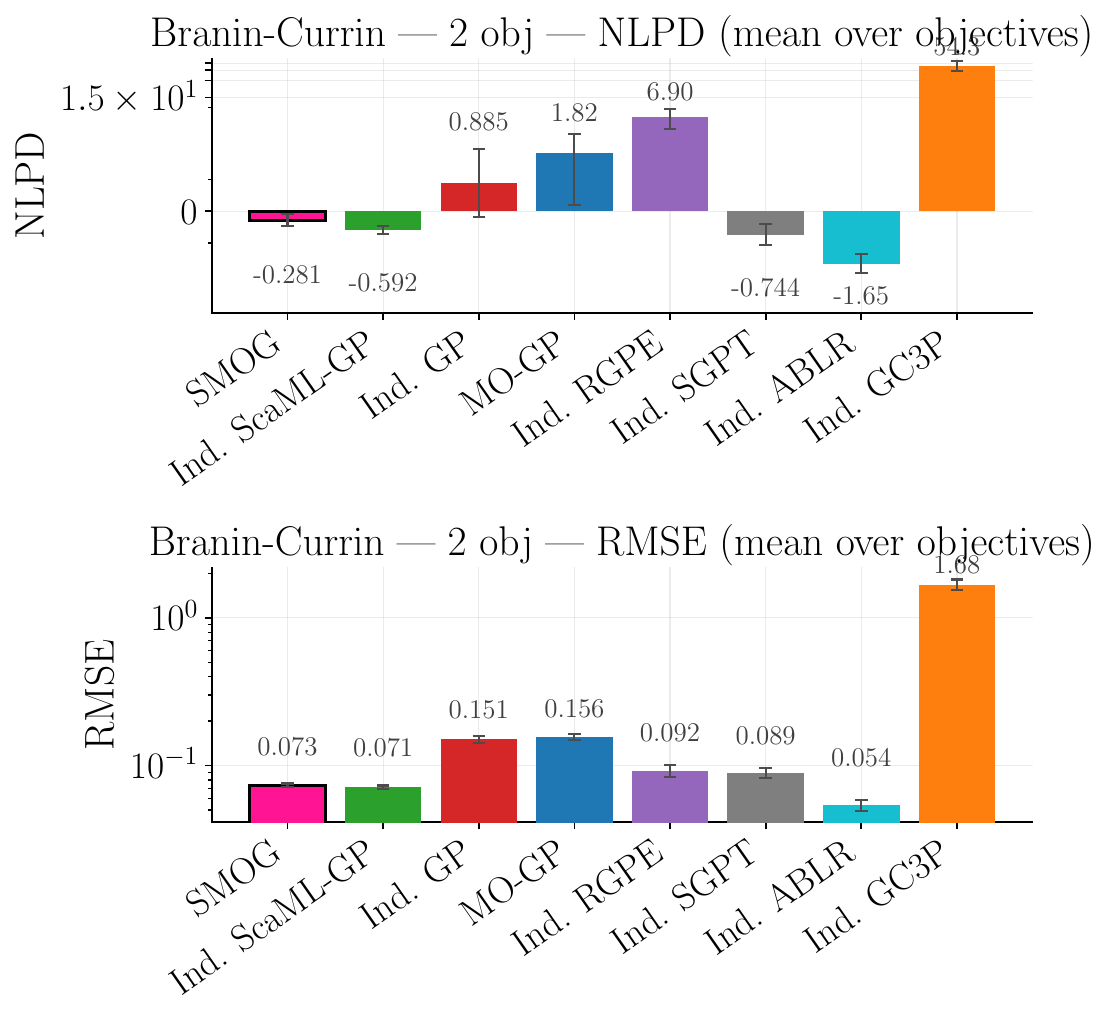}
        \caption{2 objectives}
    \end{subfigure}
    \hfill
    \begin{subfigure}[b]{0.49\textwidth}
        \centering
        \includegraphics[width=\textwidth]{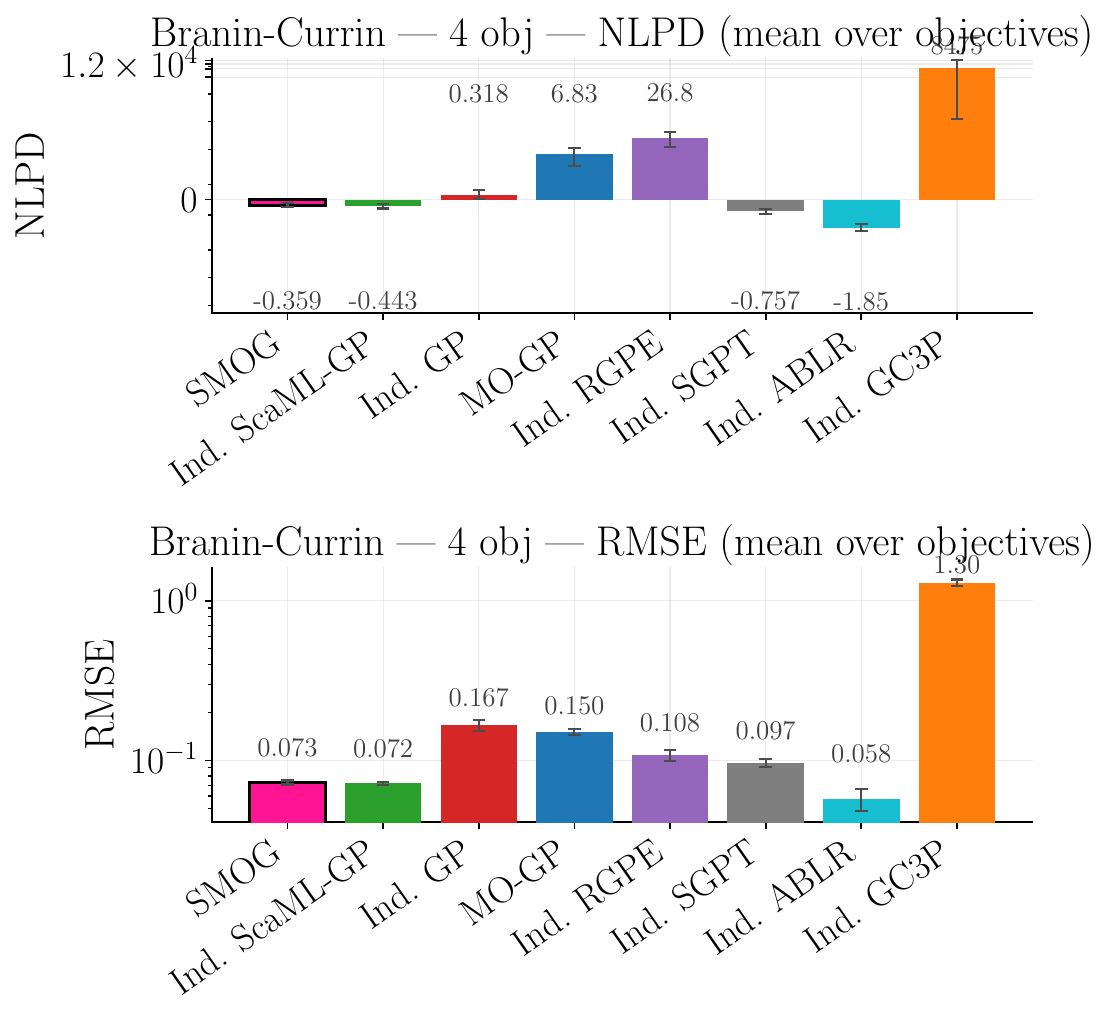}
        \caption{4 objectives}
    \end{subfigure}
    \caption{%
        Surrogate quality (NLPD and RMSE, lower is better) on the \branincurrin benchmark.
        Error bars denote $\pm 1$ \ac{SEM} over seeds.
        \ourmethod improves RMSE over non-meta-learning baselines; NLPD results are mixed, with parametric transfer methods (\IndABLR, \IndSGPT) achieving better calibration than \ourmethod on this benchmark.%
    }
    \label{fig:surrogate_quality_branincurrin}
\end{figure}

\begin{figure}[H]
    \centering
    \begin{subfigure}[b]{0.49\textwidth}
        \centering
        \includegraphics[width=\textwidth]{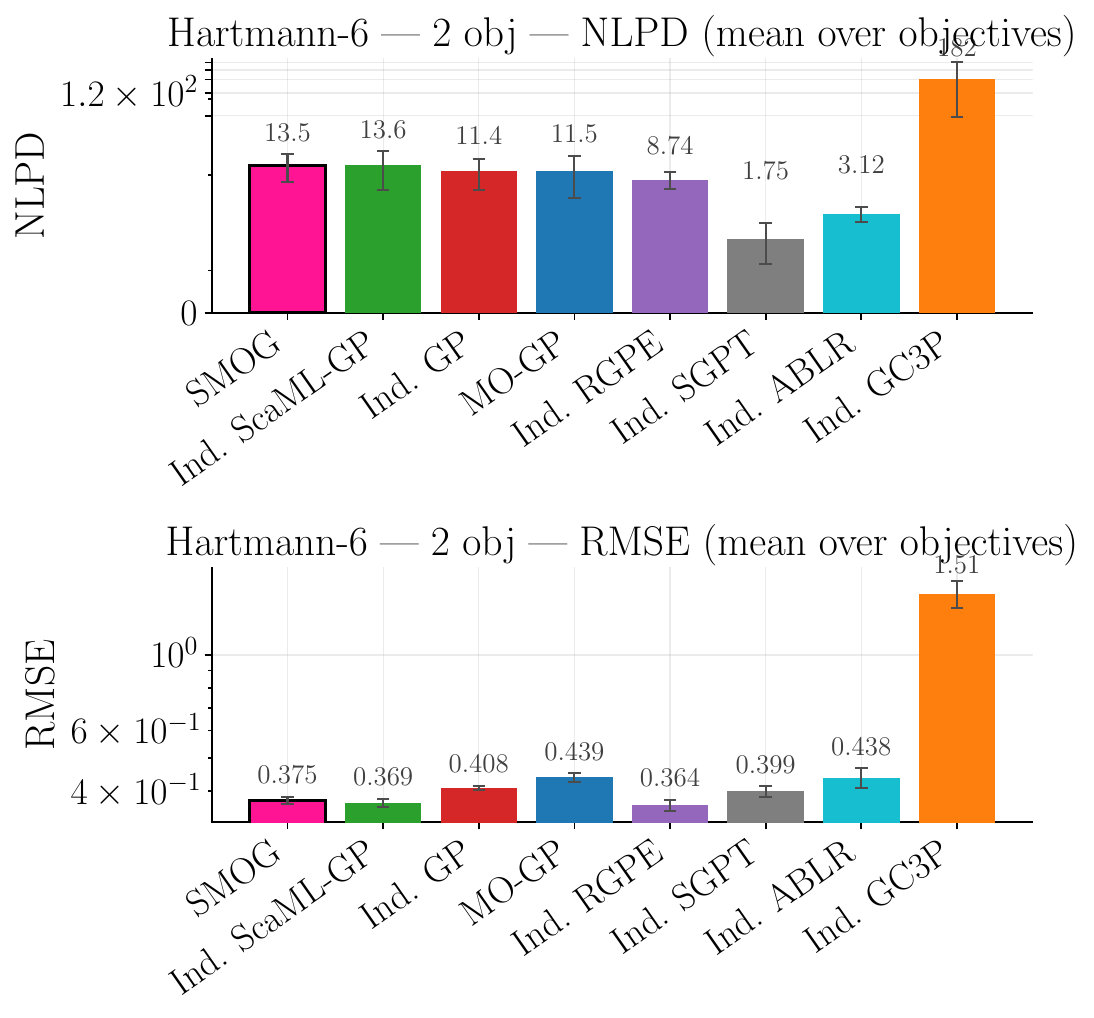}
        \caption{2 objectives}
    \end{subfigure}
    \hfill
    \begin{subfigure}[b]{0.49\textwidth}
        \centering
        \includegraphics[width=\textwidth]{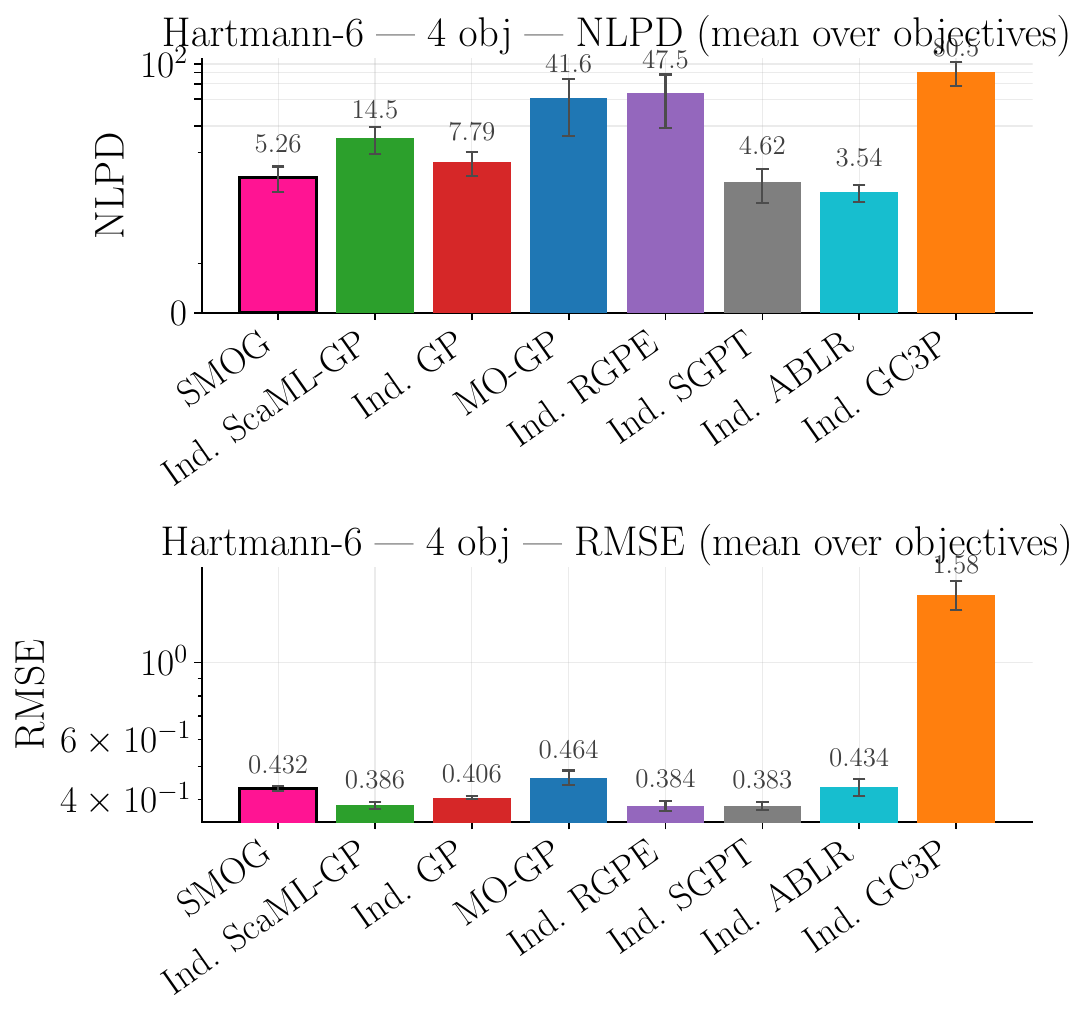}
        \caption{4 objectives}
    \end{subfigure}
    \caption{%
        Surrogate quality (NLPD and RMSE, lower is better) on the Hartmann-6 benchmark.
        Error bars denote $\pm 1$ \ac{SEM} over seeds.%
    }
    \label{fig:surrogate_quality_hartmann6}
\end{figure}

\begin{figure}[H]
    \centering
    \begin{subfigure}[b]{0.49\textwidth}
        \centering
        \includegraphics[width=\textwidth]{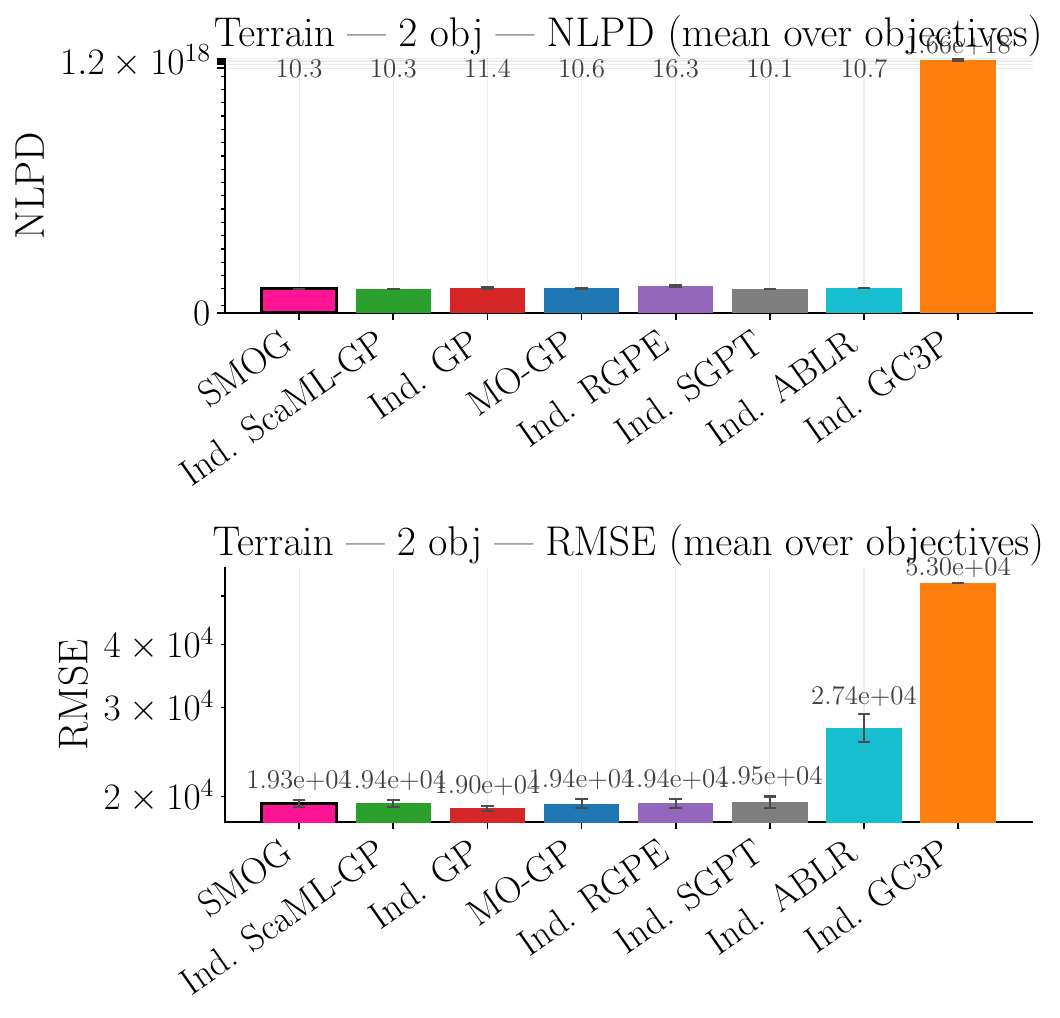}
        \caption{2 objectives}
    \end{subfigure}
    \hfill
    \begin{subfigure}[b]{0.49\textwidth}
        \centering
        \includegraphics[width=\textwidth]{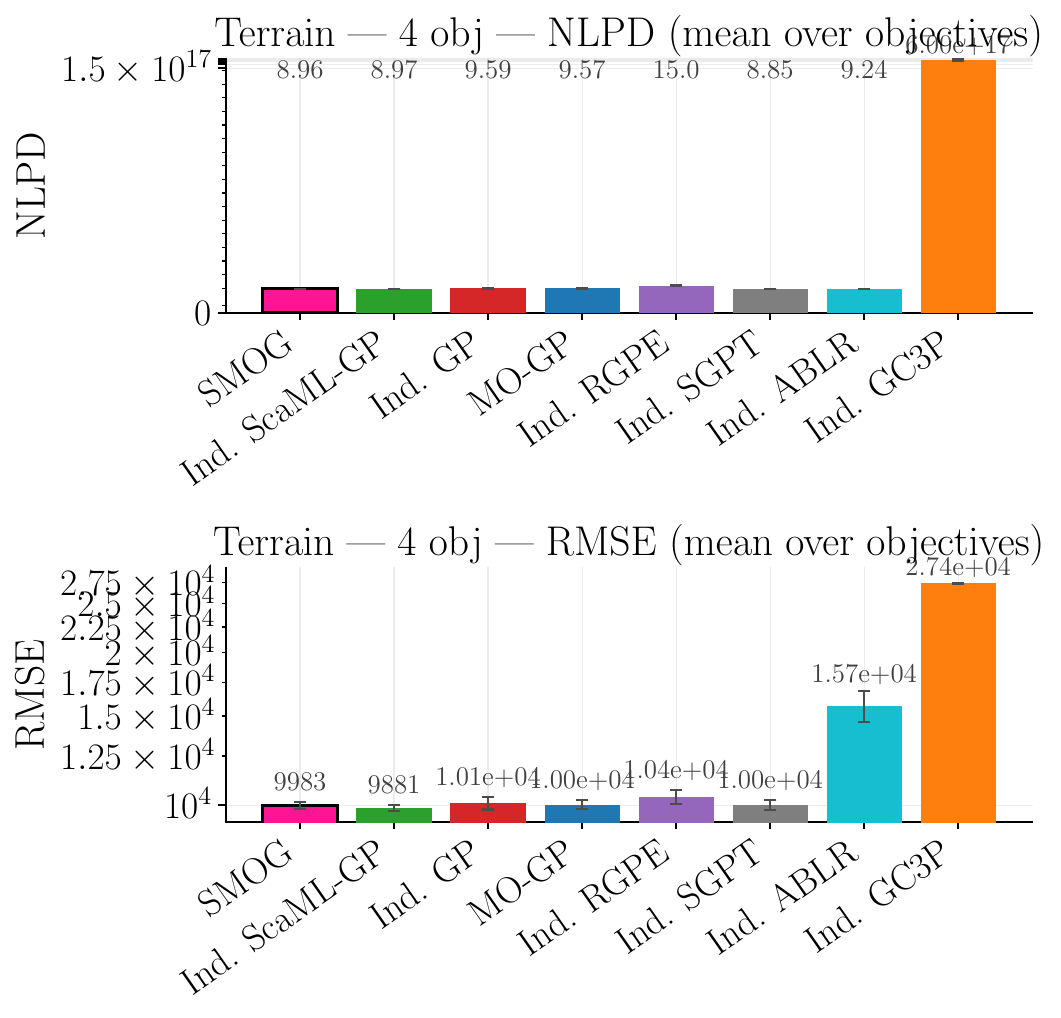}
        \caption{4 objectives}
    \end{subfigure}
    \caption{%
        Surrogate quality (NLPD and RMSE, lower is better) on the Terrain benchmark.
        Error bars denote $\pm 1$ \ac{SEM} over seeds.%
    }
    \label{fig:surrogate_quality_terrain}
\end{figure}


\end{document}